\let\emph\textit
\definecolor{myred}{RGB}{203, 41, 123}
\newcommand{\algcomment}[1]{\textcolor{myred}{$\triangleleft\quad$\textit{#1}$\quad\triangleright$}}
\DeclareMathAlphabet\mathbfcal{OMS}{cmsy}{b}{n} 
\newtheorem{theorem}{Theorem}
\newtheorem{example}{\emph{Example}}
\newtheorem{lemma}{Lemma}
\def\BibTeX{{\rm B\kern-.05em{\sc i\kern-.025em b}\kern-.08em
    T\kern-.1667em\lower.7ex\hbox{E}\kern-.125emX}}
\title{Adaptive  Prediction-Powered  AutoEval with Reliability and Efficiency Guarantees}
\author{%
  Sangwoo Park\ \ \ Matteo Zecchin\ \ \ Osvaldo Simeone\\
  Department of Engineering\\
  King's College London\\
  London, United Kingdom \\
  \texttt{\{sangwoo.park, matteo.1.zecchin, osvaldo.simeone\}@kcl.ac.uk} \\
}
\begin{document}
\maketitle
\begin{abstract}
    Selecting  artificial intelligence (AI) models, such as large language models (LLMs), from multiple candidates requires accurate performance estimation. This is ideally achieved through empirical evaluations involving abundant real-world data. However, such evaluations are costly and impractical at scale. To address this challenge, autoevaluation methods leverage synthetic data produced by automated evaluators, such as LLMs-as-judges, reducing variance but potentially introducing bias. Recent approaches have employed semi-supervised prediction-powered inference (\texttt{PPI}) to correct for the bias of autoevaluators. However, the use of autoevaluators may lead in practice to a degradation in sample efficiency compared to conventional  methods using only real-world data. In this paper, we propose \texttt{R-AutoEval+}, a novel framework that provides finite-sample reliability guarantees on the model evaluation, while also ensuring an enhanced (or at least no worse) sample efficiency  compared to conventional methods. The key innovation of \texttt{R-AutoEval+} is an  adaptive construction of the model evaluation variable, which dynamically tunes its reliance on synthetic data, reverting to conventional methods when the autoevaluator is insufficiently accurate. Experiments on the use of LLMs-as-judges for the optimization of quantization settings for the weights of an LLM,  for prompt design in LLMs, and for test-time reasoning budget allocation in LLMs confirm the reliability and efficiency of  \texttt{R-AutoEval+}.
\end{abstract}

\section{Introduction}  \label{Sec:intro}

\begin{figure}[t] \begin{center}\includegraphics[scale=0.09]{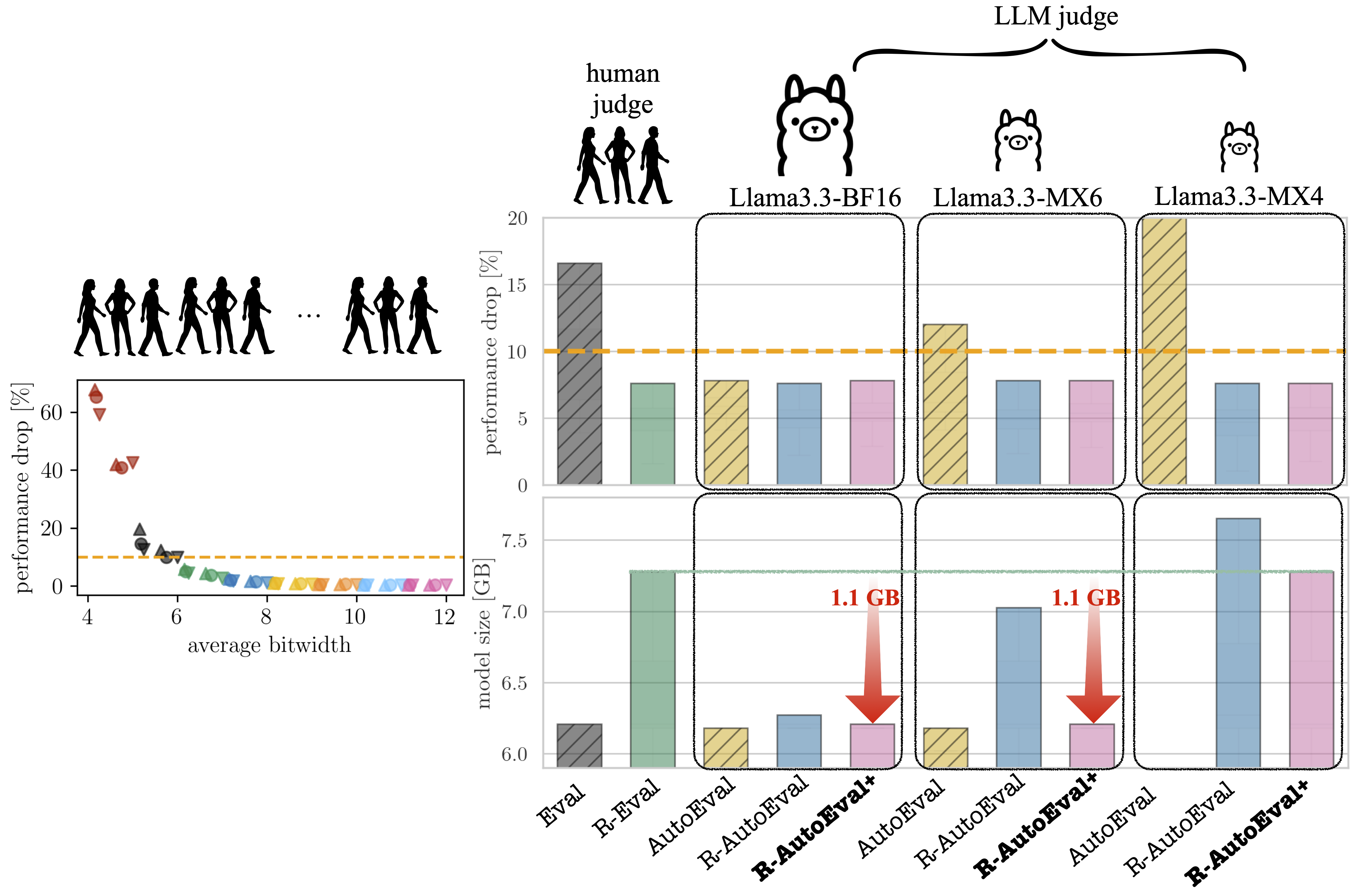}
    \end{center}
    \caption{ How to select the lightest quantized Llama-3.1-8B-Instruct model \citep{grattafiori2024llama} (in the MX quantization format \citep{rouhani2023microscaling}) that guarantees up to $10 \%$ performance drop as compared to the unquantized version (BF16) (for the TriviaQA task \citep{joshi2017triviaqa})?   (left) Ground-truth risk $R$ for different MX quantization settings, requiring massive human-labeled data.  (right) Performance drop and corresponding model size for the  models chosen  via \texttt{Eval}, \texttt{AutoEval} \citep{novikova2017we,liu2016not, zheng2023judging}, \texttt{R-Eval} \citep{waudby2024estimating}, \texttt{R-AutoEval} \citep{einbinder2024semi}, and the proposed \texttt{R-AutoEval+}.     We adopt Llama-3.3-70B-Instruct \citep{grattafiori2024llama}  BF16/MX6/MX4 as the autoevaluators,  and set target risk in (\ref{eq:goal}) to $\alpha=0.1$  and target reliability in (\ref{eq:type_1}) to $1-\delta=0.9$. Maximum values are reported within the $1.5$ interquartile range (IQR) range  \citep{mcgill1978variations} across $500$ independent experiments (see Sec.~\ref{sec:experiments} for details).} 
    \label{fig:overall} 
 \end{figure}

\subsection{Context and Motivation}
Selecting an artificial intelligence (AI) model among multiple candidates necessitates accurately estimating each model's performance. Typically, performance assessment involves actively employing each model to gather relevant empirical evidence or data. To mitigate the substantial cost and practical burden of real-world testing, \emph{autoevaluation}  leverages automated tools to evaluate model performance without direct human intervention \citep{novikova2017we,liu2016not, chaganty2018price, bohnet2022attributed, zheng2023judging, badshah2024reference}.

Standard evaluation based on human judgment -- referred to as \texttt{Eval} -- and autoevaluation -- \texttt{AutoEval} -- each present distinct advantages and drawbacks.  \texttt{Eval} provides unbiased estimates of a model’s performance but requires costly annotation. In contrast, \texttt{AutoEval} is cheaper, as it can rely on abundant synthetic data, but it may introduce estimation bias \citep{novikova2017we, chaganty2018price}. Consequently, neither approach, in isolation, ensures \emph{reliable} model evaluation, potentially leading to erroneous evaluation outcomes.

Reliability in evaluation methods can be established via \emph{confidence intervals} that accurately capture the true expected performance at a specified coverage level, or through \emph{testing strategies}  that detect whether target performance levels are met at specified false detection probabilities. For \texttt{Eval}, both approaches can be implemented using standard statistical methods 
\citep{hoeffding1994probability, maurer2009empirical, waudby2024estimating}, which we refer to as \texttt{R-Eval}.    

Achieving reliability in \texttt{AutoEval} is more challenging. Recent works, including \citep{boyeau2024autoeval, fisch2024stratified, eyre2024auto, schneider2024hyperband}, have successfully applied a \emph{semi-supervised} inference framework known as \emph{prediction-powered inference} (\texttt{PPI}) \citep{angelopoulos2023prediction, angelopoulos2023ppi++} to correct for the inherent bias of \texttt{AutoEval}  by leveraging a small amount of human-labeled, real-world data. These methodologies, referred to here  as \texttt{R-AutoEval}, either guarantee reliability only \emph{asymptotically} -- as the sizes of both synthetic and real-world data sets grow indefinitely \citep{boyeau2024autoeval, fisch2024stratified, eyre2024auto} -- or lack explicit \emph{sample efficiency}  guarantees  -- i.e., they do not provably yield narrower confidence intervals or higher test powers compared to  \texttt{R-Eval} \citep{einbinder2024semi}.

In this paper, we introduce \texttt{R-AutoEval+}, a novel autoevaluation framework that provides  finite-sample (non-asymptotic) reliability guarantees while also ensuring improved (or at least no worse) sample efficiency compared to \texttt{R-AutoEval}. The primary innovation of \texttt{R-AutoEval+} is its sequential construction of the model evaluation variable, enabling it to \emph{adaptively} adjust its reliance on synthetic data  based on evolving  assessments of the autoevaluator's quality. \texttt{R-AutoEval+} seamlessly reverts to \texttt{R-Eval} when synthetic data are deemed to be of insufficient quality, while  otherwise employing a weighted variant of \texttt{R-AutoEval} to enhance efficiency.

At a technical level, \texttt{R-AutoEval+} leverages two primary methodologies. The first is the game-theoretic \emph{testing-by-betting} approach \citep{shafer2021testing, ramdas2023game} to mean estimation introduced by \citep{waudby2024estimating}. The second is \texttt{PPI++} \citep{angelopoulos2023ppi++}, an enhanced variant of \texttt{PPI} that incorporates a regularization coefficient to control the estimator's dependence on autoevaluator-generated data \citep{angelopoulos2023ppi++}.


\subsection{Overview of the Main Results}\label{sec:overview}

To clarify the main concepts, consider the problem of selecting a large language model (LLM) from multiple candidate models characterized by different sizes. These models are derived from a single base LLM through quantization with varying average bitwidths using the MX format \citep{rouhani2023microscaling}. As illustrated in left panel of Fig. \ref{fig:overall}, our goal is to identify models that maintain performance comparable to the full-precision baseline, tolerating at most a predefined performance degradation threshold $\alpha$ (e.g., $\alpha=10\%$ in the figure).

Formally, consider a bounded loss function $\ell(X,Y)\in [0,1]$ that measures the performance of a given candidate model.  In the context of Fig. \ref{fig:overall}, this loss quantifies the performance gap between the candidate quantized LLM and its full-precision counterpart.  Our primary objective is to verify whether or not the expected loss, or \emph{risk}, defined as $R=\mathbb{E}[\ell(X,Y)]$,  exceeds a target level $\alpha$, i.e., whether the following \emph{risk-controlling condition} can be satisfied \begin{equation}\label{eq:goal}R\leq \alpha.\end{equation} The challenge is that evaluating the risk $R$ necessitates access to a large number of \emph{human-generated}, i.e.,  \emph{real-world}, responses $Y$ corresponding to queries $X$.

\subsubsection{\texttt{Eval} and \texttt{AutoEval}}
Given $n$ pairs of real-world data $\mathcal{D}_n=\{(X_i,Y_i)\}_{i=1}^n$ with $(X_i, Y_i)\overset{\text{i.i.d.}}{\sim} P_{XY}=P_X P_{Y|X}$ for $i=1,...,n$,  the standard \texttt{Eval}  approach estimates the risk $R$ for any given candidate model via the empirical average  $\hat{R}_{n} = (1/n)\sum_{i=1}^n \ell(X_i, Y_i)$.  

In contrast, \texttt{AutoEval} does not assume access to real data $\mathcal{D}_n$, relying instead  the availability of a  pre-trained \emph{autoevaluator} $f: \mathcal{X}\rightarrow \mathcal{Y}$,  where $\mathcal{X}$ and $\mathcal{Y}$ denote the respective domains of input and output $X$, $Y$.  In the example of Fig. \ref{fig:overall}, the autoevaluator is a larger LLM, serving as an \emph{LLM judge} \citep{zheng2023judging}. Specifically, given an  \emph{unlabeled} data   $\mathcal{D}_N^\text{unl}=\{\tilde{X}_i\}_{i=1}^N$ with $\tilde{X}_i \overset{\text{i.i.d.}}{\sim} P_X$,  \texttt{AutoEval}  estimates the risk as $\hat{R}_N^f = (1/N)\sum_{i=1}^N \ell(\tilde{X}_i, f(\tilde{X}_i))$, thus using the outputs from the LLM judge as labels \citep{novikova2017we,liu2016not, zheng2023judging}. 

Lacking formal uncertainty quantification, the risk-controlling condition (\ref{eq:goal}) cannot be guaranteed via the risk estimates $\hat{R}_{n}$ and $\hat{R}_{N}^f$ provided by \texttt{Eval} and \texttt{AutoEval}. That is, even if the evaluation outcome for a candidate model satisfies the inequality  $\hat{R}_{n} \leq \alpha$, or  $\hat{R}_{N}^f\leq \alpha$,  its actual performance may still fail to meet the condition (\ref{eq:goal}).  In the example shown in Fig.\ref{fig:overall}, LLMs selected via \texttt{Eval} and \texttt{AutoEval} exhibit performance degradations that exceed the target threshold of $\alpha = 10\%$. Furthermore, for \texttt{AutoEval}, the performance degradation becomes larger as the quality of the LLM judge deteriorates.

\subsubsection{\texttt{R-Eval} and \texttt{R-AutoEval}} \label{subsec:r_eval_and_r_autoeva_pp}
\emph{Reliable \texttt{Eval}} (\texttt{R-Eval}) \citep{waudby2024estimating} and \emph{Reliable \texttt{AutoEval}} (\texttt{R-AutoEval}) \citep{einbinder2024semi} endow \texttt{Eval} and \texttt{AutoEval}, respectively, with reliability guarantees by formulating model evaluation as a binary hypothesis test problem. Accordingly, these evaluation protocols test the null hypothesis $\mathcal{H}_0:R > \alpha$ that the model's risk exceeds the target $\alpha$ against the alternative hypothesis $\mathcal{H}_1: R \leq \alpha$ that the risk-controlling  condition (\ref{eq:goal}) is satisfied. 

Define as $T_n\in\{0,1\}$ the test output, with $T_n=1$ indicating the decision for the alternative hypothesis $\mathcal{H}_1: R \leq \alpha$. Then, an evaluation procedure is said to be \emph{reliable} at level $1-\delta$ if the probability of incorrectly concluding that a candidate model satisfies the requirement (\ref{eq:goal}) does not exceed the level $\delta \in (0,1)$, i.e., 
\begin{align}\label{eq:type_1}
    \Pr\big[ T_n =1 | R > \alpha \big] \leq \delta.
\end{align}

While \texttt{Eval} constructs the test decision based only on real data $\mathcal{D}_n$, \texttt{R-AutoEval} incorporates also synthetic data by leveraging \texttt{PPI} \citep{angelopoulos2023prediction} as we detail in Sec. 2.

As illustrated in Fig.~\ref{fig:overall}, selecting the smallest LLM among the candidates deemed risk-controlling by \texttt{R-Eval} and \texttt{R-AutoEval}, i.e., the models with respective testing results being $T_n=1$,  indeed results in a performance drop that remains within the tolerated risk level of $\alpha = 10\%$.  However, as also shown in Fig.~\ref{fig:overall},  \texttt{R-AutoEval} may select a model with a larger size than \texttt{R-Eval}, highlighting the potential inefficiency due to the use of an LLM judge. 

\subsubsection{\texttt{R-AutoEval+}}

This paper proposes \texttt{R-AutoEval+}, a novel reliable autoevaluation method that adaptively tunes its  reliance on synthetic data based on an evolving reliability assessments of the autoevaluator. This approach  balances synthetic and real  data, reverting to conventional methods when the accuracy of the autoevaluator is insufficient, while maintaining rigorous statistical guarantees.

\texttt{R-AutoEval+} is not only reliable in the sense of satisfying the condition (\ref{eq:type_1}), but it also provides a guaranteed improvement (possibly not strict) in terms of sample efficiency.  To formalize this property, define the \emph{sample complexity} of an evaluation method producing test variable $T_n$ as the smallest average size of the real-world data set $\mathcal{D}_n$  necessary to conclude that a model  satisfies  the requirement \eqref{eq:goal} under the reliability constraint (\ref{eq:type_1}), i.e., \citep{waudby2025universal}
\begin{align} \label{eq:def:sample_compl}
    n_\text{min}(\delta) =  \mathbb{E}[ \min \{ n:\  T_n = 1 \} | R \leq \alpha].
\end{align}An evaluation scheme with smaller sample complexity $n_\text{min}(\delta)$ can generally identify more efficient candidate models with the same amount of real data. The main result is informally outlined as follows.


\begin{theorem}[Informal]  \label{theor:informal}
    Under mild regularity assumptions, for sufficiently low tolerated unreliability level $\delta$,  \texttt{R-AutoEval+} is provably more sample efficient than both \texttt{R-Eval} \citep{waudby2024estimating} and \texttt{R-AutoEval} \citep{einbinder2024semi}, i.e., 
   \begin{align} \label{eq:main_results_informal}
   n_\text{min}^{\texttt{R-AutoEval+}}(\delta) \leq \min\Big\{  n^{\texttt{R-Eval}}_\text{min}(\delta),  n_\text{min}^{\texttt{R-AutoEval}}(\delta)\Big\}.
\end{align}
Furthermore, this inequality is strict when the autoevaluator is sufficiently accurate. 
\end{theorem}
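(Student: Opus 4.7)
The plan is to work in the game-theoretic testing-by-betting framework underlying both \texttt{R-Eval} and \texttt{R-AutoEval+}: each method produces a nonnegative test supermartingale $M_n$ under the null $\mathcal{H}_0: R > \alpha$, and the test that rejects as soon as $M_n \ge 1/\delta$ is reliable at level $1-\delta$ by Ville's inequality. Under mild moment conditions, a Wald-type identity gives, to leading order as $\delta \to 0$, a sample complexity inversely proportional to the expected per-round log-growth rate $\mathbb{E}[\log(M_n/M_{n-1}) \mid \mathcal{F}_{n-1}]$ of the wealth process under the alternative $\mathcal{H}_1$. Comparing the three methods therefore reduces to comparing their expected per-round log-growth rates.

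First I would write \texttt{R-AutoEval+} as placing, at each round $i$, a \texttt{PPI++}-style bet that combines the real-data loss $\ell(X_i, Y_i)$ with the synthetic loss $\ell(\tilde{X}_i, f(\tilde{X}_i))$ via a predictable weight $\lambda_i \in [0,1]$ chosen from past observations. Predictability preserves the (super)martingale property, and hence reliability (\ref{eq:type_1}). Second, because the one-parameter family of admissible bets contains $\lambda = 0$ (pure real-data bet, recovering \texttt{R-Eval}) and $\lambda = 1$ (pure \texttt{PPI} bet, recovering \texttt{R-AutoEval}), an adaptive $\lambda_i$ that maximizes the estimated expected log-bet---equivalently, that minimizes the conditional variance of the \texttt{PPI++} loss increment---has expected log-growth at least as large as either endpoint choice. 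This per-round growth-rate inequality, summed over rounds, lifts to the sample-complexity inequality in (\ref{eq:main_results_informal}).

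For the strict inequality when $f$ is sufficiently accurate, I would argue that the variance-minimizing $\lambda^\star$ lies strictly inside $(0,1)$ whenever the correlation between $\ell(X,Y)$ and $\ell(X, f(X))$ is high but not perfect; by strict convexity of the conditional variance in $\lambda$, the log-growth at $\lambda^\star$ is then strictly larger than at both boundary choices, producing a strict reduction in $n_\text{min}$.

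The main obstacle will be controlling the gap between the data-dependent choice $\lambda_i$ and the oracle variance-minimizer $\lambda^\star$, and propagating this finite-sample approximation error cleanly through the log-growth comparison. A Bernstein-type concentration bound on $\lambda_i$ around $\lambda^\star$, combined with smoothness (indeed quadratic behavior) of the log-bet in $\lambda$ near $\lambda^\star$, should render the approximation error a lower-order term; the ``sufficiently low $\delta$'' hypothesis plays exactly this role, ensuring that the leading-order growth-rate comparison dictates the sample complexity. A secondary subtlety is handling the stopping-time nature of $n_\text{min}$: I would invoke a standard optional-stopping or renewal argument to convert the pointwise log-wealth bound into a bound on the expected hitting time of the threshold $1/\delta$.
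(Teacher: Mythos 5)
Your high-level reduction is correct and matches the paper's: reliability follows from Ville's inequality for a test supermartingale, and as $\delta\to 0$ the sample complexity is governed by the expected per-round log-growth of the wealth, so it suffices to show the adaptive scheme's growth rate is at least that of either endpoint $\rho\in\{0,1\}$. Where you diverge is the \emph{mechanism} for adaptivity, and this is where a genuine gap appears. You propose a single predictable weight $\lambda_i$ (the paper's $\rho$) chosen to track the oracle variance-minimizer $\rho^\star$, and plan to control $|\lambda_i-\rho^\star|$ by Bernstein-type concentration. The paper instead runs $S$ \emph{parallel} betting games, one per fixed $\rho_s$ on a grid containing $\rho_1=0$ and $\rho_S=1$, and blends them via an exponential-weights mixture $E_n=\sum_s w_{s,0}E_{s,n}$; Lemma~\ref{lemma:sublinear_regret_weight} then gives a \emph{deterministic} constant regret $\max_s\log E_{s,n}-\log E_n\le\max_s\log(1/w_{s,0})$, so no concentration argument at the $\rho$-level is needed and each sub-game's betting variable $\lambda_{s,i}$ keeps the standard UP sublinear-regret guarantee because its $\rho_s$ is fixed.

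Your plug-in strategy faces two obstacles the paper's construction sidesteps. First, $\rho^\star$ minimizes a variance under the \emph{alternative}-hypothesis distribution, so it depends on unknown quantities; estimating it from the very stream that drives the wealth couples the choice of bet to the increments in a way that makes a sublinear-regret (assumption A1) certificate for the joint $(\lambda_{i},\rho_i)$ strategy nontrivial and unestablished in your sketch --- the ``Bernstein bound plus smoothness'' plan does not by itself yield the $o(n)$ log-regret Theorem~\ref{thm:r_eval}/\ref{theor:2} require. Second, even granting a regret bound, the index that is best at finite horizon $n$ may differ from the asymptotically optimal $\bar s=\arg\max_s g_{s,\star}$; the paper handles this by introducing the auxiliary statistic $E_n^{+}=\sum_s w_{s,0}\min\{E_{s,n},E_{\bar s,n}\}\le E_n$ and proving its suboptimality ratio $\mathbb{E}[E_n^{+}/E_{\bar s,n}(\lambda_{\bar s,\star})]\le 1$ (Lemma~\ref{lemma:numeraire_ours}), a step your proposal does not anticipate. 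Finally, note that Theorem~\ref{theor:2} only claims optimality over the discrete grid $\{\rho_s\}$, which already contains the R-Eval and R-AutoEval endpoints and hence suffices for (\ref{eq:main_results_informal}); your proposal implicitly targets the continuum optimum $\rho^\star$, which is a stronger claim the paper does not make and which would require exactly the harder tracking argument you have not supplied.
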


\section{Preliminaries: \texttt{R-Eval} and \texttt{R-AutoEval}} \label{sec:prelim}
In this section, we first review the testing-by-betting framework \citep{shafer2021testing,ramdas2024hypothesis, waudby2024estimating, waudby2025universal, ramdas2023game}, and then review \texttt{R-Eval} \citep{waudby2024estimating} and \texttt{R-AutoEval} \citep{einbinder2024semi}. Further connections to the state-of-the-art can be found in Appendix~\ref{app:related_work}.

\subsection{Testing-By-Betting} \label{subsec:testing_by_betting}
Consider the problem of testing the null hypothesis $\mathcal{H}_0: R > \alpha$ against the alternative hypothesis $\mathcal{H}_1: R \leq  \alpha$ based on the observations of   $n$ i.i.d. bounded random variables $\{q_i\}_{i=1}^n$ with $q_i \in [m,M]$ providing unbiased estimates of the risk, i.e., $\mathbb{E}[q_i]=R$. An e-value $E_n$ is a nonnegative statistic of the observations $\{q_i\}_{i=1}^n$   whose expectation under the null hypothesis does not exceed 1, i.e.,  $\mathbb{E}[E_n|R>\alpha]\leq 1$ \citep{vovk2021values}. An  e-value $E_n$  can be interpreted as providing evidence \emph{in favor} of the alternative hypothesis $\mathcal{H}_1$, i.e., of the risk-controlling condition (\ref{eq:goal}) \citep{shafer2021testing}. With an e-value $E_n$, the test 
\begin{align} \label{eq:testing_from_E}
    T_n=\mathbbm{1}(E_n \geq 1/\delta)
\end{align}
meets the reliability condition (\ref{eq:type_1}) due to Markov's inequality for a fixed $n$.

The testing-by-betting approach constructs an e-value $E_n$ sequentially by processing the observations $q_i$ one by one over index $i=1,...,n$. Specifically, an e-value can be obtained via the product of the contributions of each observation $q_i$ as \citep{waudby2024estimating, bates2021distribution} 
\begin{align} \label{eq:basic_E_n}
    E_n =\prod_{i=1}^n  \big(1-\lambda_{i}(q_{i} - \alpha)\big),
\end{align}
where $E_0=1$ and $\lambda_i \in [0,1/(M-\alpha))$ is an arbitrary function of the past observations $\{q_j\}_{j=1}^{i-1}$. To verify that the quantify  (\ref{eq:basic_E_n}) is an e-value, one can use the independence of the observations $\{q_i\}_{i=1}^n$ and the unbiasedness assumption $\mathbb{E}[q_i]=R$ as $\mathbb{E}[E_n|R>\alpha]=\prod_{i=1}^n (1-\lambda_i(\mathbb{E}[q_i|R>\alpha]-\alpha))\leq1$.


The e-value has an interpretation in terms of a sequential betting game, which supports the design of the sequence of bets $\lambda_i$ using online convex optimization  
 \citep{waudby2024estimating}. In this game, at each round $i$, based on the observations $\{q_j\}_{j=1}^{i-1}$, a gambler bets an amount of her wealth measured by  variable $\lambda_i$ on the outcome $q_i \leq \alpha$  that the next observation $q_i$ does not exceed the target $\alpha$. Accordingly, the e-value $E_n $ in \eqref{eq:basic_E_n} represents the  {wealth} accumulated after $n$ rounds of betting with initial wealth $E_0=1$ \citep{waudby2024estimating}. 
 
A more general form of e-value has been also proposed that allows the gambler to have $S \geq 1$ different betting strategies $\{\lambda_{s,i}\}_{s=1}^S$, with each strategy $\lambda_{s,i}$ being responsible for a fraction $w_{s,i}$ of the wealth. To elaborate,  fix a probability vector $w_i=[w_{1,i},...,w_{S,i}]$, which, like the betting  strategies $\{\lambda_{s,i}\}_{s=1}^S$, may depend on the past observations $\{q_j\}_{j=1}^{i-1}$. The resulting e-value is defined as the convex combination \citep[B.8]{waudby2024estimating}
\begin{align} \label{eq:general_E_n}
    E_n =\prod_{i=1}^n \sum_{s=1}^S w_{s,i} \big(1-\lambda_{s,i}(q_{i} - \alpha)\big).
\end{align}




\subsection{\texttt{R-Eval} and \texttt{R-AutoEval}} \label{subsec:CI_by_betting}

Both \texttt{R-Eval} and \texttt{R-AutoEval} follow the testing-by-betting approach presented in the previous subsection. Specifically, 
\texttt{R-Eval} computes the e-value $E_n^\texttt{R-Eval}$ in  (\ref{eq:basic_E_n}) using directly the observations $q_i = \ell_i$ of the losses  $\ell_i=\ell(X_i,Y_i)\in [m=0,M=1]$,    obtaining the decision $T_n^\texttt{R-Eval}$ in (\ref{eq:testing_from_E}). 

\texttt{R-AutoEval} \citep{einbinder2024semi} also computes the e-value  $E_n^\texttt{R-AutoEval}$ in (\ref{eq:basic_E_n}), but with effective observations $q_i = \ell_i^{f}$ that incorporate both real data and synthetic data.  In particular, each effective observation $q_i = \ell_i^{f}$ uses the real data point $(X_i,Y_i)\in \mathcal{D}_n$  together with $r=\lfloor N/n \rfloor$ autoevaluated samples $\{(\tilde{X}_i,f(\tilde{X}_i))\}_{r\cdot (i-1)+1}^{r\cdot i}$ with $\tilde{X}_i \in \tilde{\mathcal{D}}_N$.  Note that this divides the set of unlabeled samples $\tilde{\mathcal{D}}_N$   equally  across the effective observations.

Specifically, \texttt{R-AutoEval}  uses effective observations obtained via \texttt{PPI} \citep{angelopoulos2023prediction}  by correcting the bias of the empirical autoevaluated risk using the real data sample as 
\begin{align} \label{eq:hat_R_t}
     \ell_i^f =  \underbrace{\frac{1}{r } \sum_{i'=r\cdot (i-1)+1}^{r\cdot i} \ell(\tilde{X}_{i'}, f(\tilde{X}_{i'}))}_{\text{autoevaluator data}}+\underbrace{\ell(X_i, Y_i) - \ell(X_i, f(X_i))}_{\text{bias correction}}. 
 \end{align}
One can readily check that this  is  an unbiased estimate of the risk, i.e., $\mathbb{E}[\ell_i^f]=R$, with support interval $\ell_i^f \in [m=-1,M=2]$. However, owing to the possible lower quality of the autoevaluator's labels $f(\tilde{X}_i)$, using the effective observations (\ref{eq:hat_R_t}) may result in a larger sample complexity  compared to \texttt{R-Eval} (see Fig.~\ref{fig:overall}).

\section{\texttt{R-AutoEval+}: AutoEval with Reliability and Efficiency Guarantees}

In this section, we present the proposed autoevaluation method, \texttt{R-AutoEval+}.  \texttt{R-AutoEval+} aims at balancing the importance assigned to synthetic and real-world data in the effective observations (\ref{eq:hat_R_t}). Through the proposed mechanism, \texttt{R-AutoEval+} reverts to using the effective observations $q_i=\ell_i$ of \texttt{R-Eval} when the autoevaluator is inaccurate, while leveraging the effective observations of \texttt{R-AutoEval}, $q_i=\ell_i^f$, when the autoevaluator is very accurate. The approach is based on an adaptive construction of the effective observations  used in the e-value statistic (\ref{eq:general_E_n}), dynamically tuning the reliance on synthetic data based on evolving reliability assessments of the autoevaluator.


The key idea is to weight  the contribution of synthetic data in the e-value with a factor $\rho \in[0,1]$, so that setting $\rho=0 $ recovers \texttt{AutoEval}, while setting $\rho=1 $ yields \texttt{R-AutoEval}. In order to automatically identify the best value of the factor $\rho $, \texttt{R-AutoEval+} processes the real-world data samples sequentially across index $i=1,...,n$,  tracking the performance of $S$ possible candidate values $\{\rho_s\}_{s=1}^S$, with  \begin{equation}\label{eq:rhoinc}0=\rho_1<\cdots<\rho_S=1.\end{equation} Specifically, \texttt{R-AutoEval+} maintains adaptive weights  $\{w_{s,i}\}_{s=1}^S$ across index $i=1,...,n$, with weight $w_{s,i}$ associated to candidate value $\rho_s$.  The resulting effective observations are used in \eqref{eq:general_E_n}. 

After providing a more detailed description of \texttt{R-AutoEval+}, this section  demonstrates  that \texttt{R-AutoEval+} can provably enhance the sample efficiency of both \texttt{R-Eval} and \texttt{R-AutoEval}. 





\subsection{Adaptive Effective Observations and E-Values}\label{sec:adap}

At each round $i$, \texttt{R-AutoEval+} computes $S$ effective observations $\{\ell_{s,i}^{f}\}_{s=1}^{S}$, with each effective observation $\ell_{s,i}^f$ weighting the contribution of synthetic data via the factor $\rho_s$. In particular, generalizing (\ref{eq:hat_R_t}) via \texttt{PPI++} \citep{angelopoulos2023ppi++},  each $s$-th effective observation is given by 
\begin{align}\label{eq:effectiveobs}
    \ell_{s,i}^{f} =  \underbrace{\frac{\rho_s}{r} \sum_{i'= r \cdot (i-1)+1}^{ r \cdot i} \ell(\tilde{X}_{i'}, f(\tilde{X}_{i'}))}_{\text{autoevaluator data}}  +\underbrace{\ell(X_i, Y_i) - \rho_s\cdot \ell(X_i, f(X_i))}_{\text{bias correction}},
\end{align}  
where the factor $\rho_s$ multiplies the contributions of the autoevaluator. The observation (\ref{eq:effectiveobs}) is a bounded unbiased estimate of the risk $R$, i.e., $\mathbb{E}[\ell_{s,i}^{f}]=R$, with $\ell_{s,i}^{f} \in [m=-\rho_s, M=1+\rho_s]$ \citep{angelopoulos2023ppi++}.

Using the effective observations 
 $\{q_{s,i}=\ell_{s,i}^{f}\}_{s=1}^S$, \texttt{R-AutoEval+} constructs a variant of the e-value $E_n$ in \eqref{eq:general_E_n} in which a different bounded unbiased estimate of the risk, namely $\ell_{s,i}^{f} $, is used for each $s$-th term, i.e.,  \begin{align} \label{eq:general_E_n1}
    E_n =\prod_{i=1}^n \sum_{s=1}^S w_{s,i} \big(1-\lambda_{s,i}(\ell_{s,i}^{f} - \alpha)\big).
\end{align} Note that this is indeed a valid e-value, i.e., $\mathbb{E}[E_n|R> \alpha] \leq 1$.  

\begin{wrapfigure}{r}{0.36\textwidth} 
\centering \vspace{-1.1cm}
\includegraphics[width=0.36\textwidth]{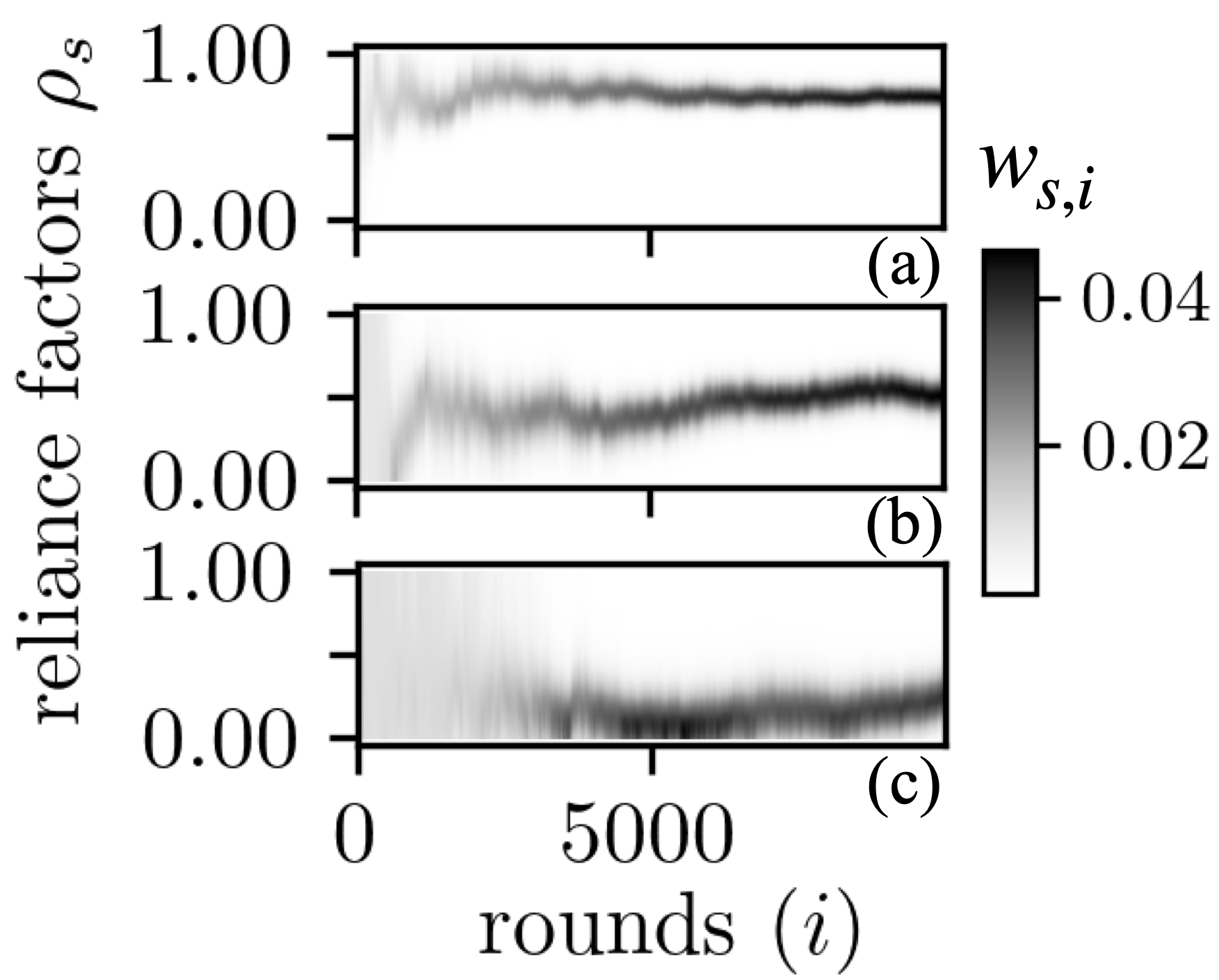} 
    \caption{Heatmap of the evolution of the weights $\{w_{s,i}\}_{s=1}^{100}$ assigned to the factors $\rho_s$ as a function of the processing round $i$ for Example~\ref{example}. The autoevaluator reports the correct loss with probability $\gamma=0.99$ (top), $\gamma=0.9$ (middle), and $\gamma=0.7$ (bottom). \texttt{R-AutoEval+} assigns larger weights to synthetic data, i.e., to larger values of $\rho_s$,  when the autoevaluator is of higher quality.
    \vspace{-1cm} }\label{fig:portfolio}
\end{wrapfigure}
In the e-value \eqref{eq:general_E_n1}, the weights $w_{s,i}$ associated with each factor $\rho_s$ are updated over index $i=1,...,n$ depending on the evidence accumulated up to round $i-1$. To do this, define as \begin{equation} \label{eq:per_rho_E_n}
 E_{s,i}= \prod_{j=1}^i  \big(1-\lambda_{s,j}(\ell^f_{s,j} - \alpha)\big)
 \end{equation}the e-value (\ref{eq:basic_E_n}) computed using only the effective observations $\ell^f_{s,i}$ up to round $i$. Intuitively, a larger value of the quantity $ E_{s,i}$ indicates that the factor $\rho_s$ yields a large evidence in favor of the alternative (risk-controlling) hypothesis (\ref{eq:goal}).



Following this logic, the weight $w_{s,i}$ is  updated as \begin{equation}\label{eq:rebalance} w_{s,i} = \frac{w_{s,0}\cdot E_{s,i-1}}{\sum_{s'=1}^S w_{s',0}\cdot E_{s',i-1}}, \end{equation} for all $s=1,...,S$, with initial strictly  positive weights $\{w_{s,0}\}_{s=1}^S$, where  $\sum_{s=1}^S w_{s,0}=1$ and $E_{s,0}=1$. By (\ref{eq:rebalance}), values of the factor $\rho_s$ associated with a larger evidence $E_{s,i-1}$ in favor of the risk-controlling condition (\ref{eq:goal}) are assigned a proportionally larger weight in the e-value (\ref{eq:general_E_n}).

The update (\ref{eq:rebalance}) aims at identifying the most informative effective observations $\ell^f_{s,i}$ for $s=1,...,S$ by sequentially processing the available data. The following simple example demonstrates its operation.

\begin{example}\label{example}
Consider a setting  where the  autoevaluator yields the same output as the human judge with probability $\gamma$. Specifically, assume a binary loss $\ell(X,Y)\in\{0,1\}$  with mean equal to the risk $R=0.1$, and on autoevaluated loss $\ell(X,f(X))$  given by  $\ell(X,f(X)) = \ell(X,Y)\cdot(1-\epsilon)+(1-\ell(X,Y))\cdot\epsilon$, where $\epsilon\in\{0,1\}$ has mean $1-\gamma$. Fix the target risk $\alpha=0.12$ and  the synthetic-to-real data ratio $r=10$.     

Fig. \ref{fig:portfolio} provides an heatmap of the evolution of the weights $\{ w_{s,i}\}_{s=1}^{S}$ over index $i$ for $S=100$ uniformly spaced candidate factors in the range $[0,1]$. We set initial weights $w_{s,0}=1/S$ for $s=1,...,S$. The top figure corresponds to $\gamma=0.99$, the middle figure to $\gamma=0.9$ and the bottom figure to $\gamma=0.7$, so that the quality of the autoevaluator decreases in going from the top panel to the bottom panel. The figure shows that, as the autoevaluator becomes less reliable, the update (\ref{eq:rebalance}) correctly decreases the reliance of the e-value (\ref{eq:general_E_n1}) on the effective observations that leverage synthetic data. Specifically, it is seen from the figure that the weights $\{w_{s,i}\}_{s=1}^S$ tend to concentrate around the values $0.9$, $0.5$, and $0$  for $\gamma=0.99, 0.9, $ and $0.7$, respectively.\qed
\end{example}
The overall procedure of \texttt{R-AutoEval+} is summarized in Algorithm~\ref{alg:ppe+} (Appendix~\ref{app:algo}).

\subsection{Sample Efficiency Guarantees} \label{sec:theory}
In this section, we analyze the sample efficiency of \texttt{R-AutoEval+}. We start by reviewing existing results on \texttt{R-Eval} and \texttt{R-AutoEval}, and then we introduce a formal version of Theorem~\ref{theor:informal} (see Sec.~\ref{Sec:intro}) on the sample efficiency of \texttt{R-AutoEval+}.

\subsubsection{Sample complexity of \texttt{R-Eval} and \texttt{R-AutoEval}} \label{subsec:sample_compl_r_eval}
Reference  \citep{waudby2025universal} showed that the sample complexity of the e-value (\ref{eq:basic_E_n}) can be analyzed by considering its maximum expected logarithmic per-round increment under the alternative hypothesis, i.e., 
\begin{align} \label{eq:g_star}
    g_\star = \mathbb{E}[\log(1-\lambda_\star(q_i-\alpha))|R\leq\alpha],
\end{align} 
where $\lambda_\star$ is the optimal constant betting variable $\lambda_\star = \arg\max_{\lambda \in [0,1/(M-\alpha))}\mathbb{E}[\log(1-\lambda(q_i-\alpha))|R\leq \alpha]$. Specifically, defining $E_n(\lambda_\star)=\prod_{i=1}^n (1-\lambda_\star(q_i-\alpha))$ for the e-value (\ref{eq:basic_E_n}) with constant $\lambda_i=\lambda_\star$ for all rounds $i=1,...,n$, we have the following result.

\begin{theorem}[Sample complexity of testing-by-betting (\ref{eq:basic_E_n}) \protect{\citep[Theorem 3.3]{waudby2025universal}}]  \label{thm:r_eval}
Assume that: (A1) the betting strategy $\lambda_i$ admits sublinear regret with respect to the optimal constant $\lambda_\star$, i.e., $\log E_n(\lambda_\star) - \log E_n = o(n)$ (see Appendix~\ref{app:UP}, for an example of such betting strategy), and (A2) the instantaneous logarithmic increment of the e-value $E_n(\lambda_\star)$ has finite $\sigma$-th central moment for some $\sigma>2$, i.e., $\mathbb{E}[|\log(1-\lambda_{\star}\cdot( q_i-\alpha )) - g_{\star}|^\sigma| R\leq \alpha]<\infty$. Then, the sample complexity (\ref{eq:def:sample_compl}) obtained by the  test variable $T_n$ in (\ref{eq:testing_from_E}) with the e-value $E_n$ in (\ref{eq:basic_E_n})  admits the limit 
\begin{align} \label{eq:ws_univ_thm}
    \lim_{\delta \rightarrow 0^+} \frac{n_\text{min}(\delta)}{\log(1/\delta)} = \frac{1}{g_{\star}}.
\end{align}
\end{theorem}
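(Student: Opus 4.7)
The plan is to recognise the stopping rule as a first-passage time of a random walk (perturbed by a sublinear regret term) over a linearly growing threshold, and then upgrade an almost-sure renewal-type asymptotic to convergence in expectation via a moment-based uniform integrability argument. Set $\tau_\delta := \inf\{n : \log E_n \geq \log(1/\delta)\}$, so that $n_{\mathrm{min}}(\delta) = \mathbb{E}[\tau_\delta \mid R \leq \alpha]$. Decompose $\log E_n = \log E_n(\lambda_\star) - r_n$, where the cumulative regret $r_n$ is $o(n)$ by (A1); the first term is an i.i.d. sum $\sum_{i=1}^n Y_i$ with $Y_i = \log(1 - \lambda_\star(q_i - \alpha))$, mean $g_\star$, and (under the alternative $R < \alpha$) strictly positive drift $g_\star > 0$ by the definition of $\lambda_\star$.

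The first step is the almost-sure asymptotic $\tau_\delta/\log(1/\delta) \to 1/g_\star$. The strong law of large numbers yields $\log E_n(\lambda_\star)/n \to g_\star$ almost surely, and (A1) transfers this to $\log E_n/n \to g_\star$. Sandwiching $\log E_n$ between $n g_\star (1 \pm \epsilon)$ for all eventually large $n$ sandwiches $\tau_\delta$ between $\log(1/\delta)/(g_\star(1\pm\epsilon))$ for all sufficiently small $\delta$; sending $\epsilon\to 0$ gives the a.s.\ limit.

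The second step, and the place where (A2) is needed, is to upgrade this to convergence in mean. The plan is to apply a Marcinkiewicz--Zygmund (or Rosenthal) inequality to the centred i.i.d. sum $\sum_{i=1}^n (Y_i - g_\star)$: the $\sigma > 2$ central-moment condition bounds $\Pr(|\sum_{i=1}^n (Y_i - g_\star)| \geq \eta n)$ by $O(n^{-\sigma/2})$ for any fixed $\eta>0$. Combined with the sublinear regret from (A1), a dyadic-block argument bounds $\Pr(\tau_\delta > (1+\eta)\log(1/\delta)/g_\star)$ polynomially in $\log(1/\delta)$, which suffices to control $\mathbb{E}[\tau_\delta \mathbbm{1}\{\tau_\delta > C\log(1/\delta)\}]$ and establish uniform integrability of the family $\{\tau_\delta/\log(1/\delta)\}$. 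Convergence in mean then follows from the a.s.\ limit by the Vitali convergence theorem, yielding (\ref{eq:ws_univ_thm}).

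The main obstacle is the interaction between (A1) and (A2): (A1) gives only an $o(n)$ regret, but to produce a tail with the polynomial rate needed above one in practice needs a quantitative high-probability regret bound, such as the $O(\sqrt{n\log n})$ rate available for the universal-portfolio betting strategy referenced in Appendix~\ref{app:UP}. Matching the regret rate against the exponent $\sigma/2$ coming from (A2), so that regret-induced tail contributions to $\mathbb{E}[\tau_\delta]$ are strictly negligible compared to the leading term $\log(1/\delta)/g_\star$, is the delicate bookkeeping step; the remainder of the argument is standard renewal theory.
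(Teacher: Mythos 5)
This theorem is imported verbatim from \citep[Theorem 3.3]{waudby2025universal} and the paper does not reprove it, but Appendix~\ref{app:pf_our_thm} reuses exactly that technique to prove Theorem~\ref{theor:2}, which makes a comparison possible. The paper's route is an explicit, finite-$\delta$ bound on the expected stopping time $\tau_\delta=\inf\{n:\log E_n\geq \log(1/\delta)\}$: fix a threshold $m\approx \log(1/\delta)/(g_\star-c)$ with $c=\beta g_\star/(1+\beta)$, write $\mathbb{E}[\tau_\delta]\leq m+1+\sum_{k\geq m}\Pr[|k^{-1}\log E_k-g_\star|\geq c]$, and then split each tail term into a regret piece (handled by (A1) together with the suboptimality-ratio/numeraire property, Lemma~\ref{lemma:numeraire_ws}) and a concentration piece (handled by the Chebyshev--Nemirovski inequality via (A2)), finally letting $\beta\to 0$ as $\delta\to 0$. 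Your route is genuinely different and softer: you derive the almost-sure first-passage asymptotic $\tau_\delta/\log(1/\delta)\to 1/g_\star$ from the strong law plus sublinear regret, and then pass to expectations by establishing uniform integrability of $\{\tau_\delta/\log(1/\delta)\}$ via the $\sigma$-th moment bound and Vitali. Both are valid; yours delivers the lower bound $\liminf\geq 1/g_\star$ for free through Fatou, while the paper's version yields explicit constants, which is what it actually needs when adapting the argument to $E_n^{\mathtt{+}}$ in Theorem~\ref{theor:2}.

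The obstacle you flag at the end is not a real gap under the intended reading of (A1). The $o(n)$ regret there is a \emph{deterministic} worst-case guarantee on $\log E_n(\lambda_\star)-\log E_n$; this is exactly how the paper uses it in the proof of Theorem~\ref{theor:2}, where the regret contribution is bounded by the indicator $\mathbbm{1}\big(k^{-1}r_k^{\mathtt{+}}\geq c/2\big)$ rather than a probability, and this indicator is zero for all $k$ past a fixed, $\delta$-independent index because $r_k/k\to 0$ deterministically (for the universal-portfolio strategy in Appendix~\ref{app:UP} the regret is in fact $O(\log k)$). With that reading, once $n$ exceeds a fixed constant the regret contributes nothing to $\Pr[\tau_\delta>n]$, and the tail is governed solely by your Marcinkiewicz--Zygmund estimate $O(n^{-\sigma/2})$, which since $\sigma>2$ is summable and gives exactly the uniform integrability you need; summing the tail also reproduces the paper's $O(m^{1-\sigma/2})$ correction. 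So no quantitative high-probability regret rate is required -- the delicate bookkeeping you anticipate collapses, and the proof goes through as you sketched it.
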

To interpret this result, consider the second-order Taylor approximation $\log(1-y)\simeq -y -y^2/2$, which yields (see also \citep[Sec. 4.2]{waudby2025universal})
\begin{align} \label{eq:g_star_Tayler}
    \frac{1}{g_\star} \simeq 2\bigg(1 +  \frac{\text{Var}(q_i|R\leq\alpha)}{(\alpha-R)^2} \bigg)
\end{align}
According to this approximation, the sample complexity grows with the variance $\text{Var}(q_i|R\leq \alpha) = \mathbb{E}[|q_i-R|^2|R\leq \alpha]$ of the observation under the alternative hypothesis. 

The result (\ref{eq:ws_univ_thm}) can be readily applied to obtain the sample complexities of \texttt{R-Eval} and \texttt{R-AutoEval} by setting  $q_i=\ell_i$ and $q_i=\ell_i^f$, respectively, in the e-value (\ref{eq:basic_E_n}). In the presence of a low-quality autoevaluator, it is known that \texttt{PPI} can increase the variance of the risk estimates \citep{angelopoulos2023prediction, angelopoulos2023ppi++, fisch2024stratified, boyeau2024autoeval}. Therefore, based on the approximation (\ref{eq:g_star_Tayler}),  \texttt{R-AutoEval} may have a larger sample complexity than \texttt{R-Eval}  when the autoevaluation is not sufficiently accurate.

\subsubsection{Sample complexity of \texttt{R-AutoEval+}}
\texttt{R-AutoEval+} optimizes not only the $S$ betting strategies $\{\lambda_{s,i}\}_{s=1}^S$, but also the weights $\{w_{s,i}\}_{s=1}^S$ associated to the factors $\{\rho_s\}_{s=1}^S$ determining the reliance of the effective observations (\ref{eq:effectiveobs}) on the autoevaluated data. Since the weight update (\ref{eq:rebalance})  has the form of \emph{exponential weights} \citep{freund1997decision}, the following property follows from existing results on online convex optimization \citep[Lemma 1]{de2014follow}.
\begin{lemma}[Sublinear regret for the weights (\ref{eq:rebalance})]\label{lemma:sublinear_regret_weight} For any set of  betting strategies $\{\lambda_{s,i}\}_{s=1}^S$ and for any positive initial weights $\{w_{s,0}\}_{s=1}^S$, the weight update strategy (\ref{eq:rebalance}) satisfies
\begin{align} \label{eq:regret_weight}
        \max_{s=1,...,S}\underbrace{\sum_{i=1}^n \log   \big(1-\lambda_{s,i}(\ell^f_{s,i} - \alpha)\big)}_{\log E_{s,n}} -  \underbrace{\sum_{i=1}^n \log \sum_{s=1}^S w_{s,i} \big(1-\lambda_{s,i}(\ell_{s,i}^{f} - \alpha)\big)}_{\log E_n}  \leq \max_{s=1,...,S}\log \bigg(\frac{1}{w_{s,0}}\bigg).
    \end{align}
\end{lemma}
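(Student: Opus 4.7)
The plan is to establish the regret bound by recognizing that the weight update (\ref{eq:rebalance}) is designed precisely so that the overall e-value $E_n$ in (\ref{eq:general_E_n1}) telescopes into a simple mixture of the per-factor e-values $E_{s,n}$. This avoids invoking the general online convex optimization machinery of \citep{de2014follow} and gives a direct one-line argument.

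Concretely, first I would substitute the update rule $w_{s,i} = w_{s,0}\,E_{s,i-1}/\sum_{s'} w_{s',0}\,E_{s',i-1}$ into each factor of the product (\ref{eq:general_E_n1}). The $i$-th factor then becomes
\begin{equation*}
\sum_{s=1}^S w_{s,i}\bigl(1-\lambda_{s,i}(\ell_{s,i}^f-\alpha)\bigr) = \frac{\sum_{s=1}^S w_{s,0}\,E_{s,i-1}\bigl(1-\lambda_{s,i}(\ell_{s,i}^f-\alpha)\bigr)}{\sum_{s=1}^S w_{s,0}\,E_{s,i-1}} = \frac{\sum_{s=1}^S w_{s,0}\,E_{s,i}}{\sum_{s=1}^S w_{s,0}\,E_{s,i-1}},
\end{equation*}
where the last equality uses the recursion $E_{s,i} = E_{s,i-1}\bigl(1-\lambda_{s,i}(\ell_{s,i}^f-\alpha)\bigr)$ implied by (\ref{eq:per_rho_E_n}).

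Next, I would multiply these ratios over $i=1,\ldots,n$ and observe that the product telescopes: every intermediate denominator cancels the preceding numerator. Using $E_{s,0}=1$ and $\sum_s w_{s,0}=1$, this yields the clean identity
\begin{equation*}
E_n = \sum_{s=1}^S w_{s,0}\,E_{s,n}.
\end{equation*}
From this mixture identity, for any fixed index $s^\star\in\{1,\ldots,S\}$ one has $E_n \geq w_{s^\star,0}\,E_{s^\star,n}$, and taking logarithms gives $\log E_{s^\star,n} - \log E_n \leq \log(1/w_{s^\star,0})$. Maximizing the left-hand side over $s^\star$ is bounded by maximizing the right-hand side over $s^\star$, which is precisely the claim (\ref{eq:regret_weight}).

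I do not foresee any substantive obstacle; the only technical care needed is to verify that the algebraic collapse of the product is valid, which merely requires the denominators $\sum_{s} w_{s,0}\,E_{s,i-1}$ to be strictly positive. This is guaranteed because the initial weights $w_{s,0}$ are assumed strictly positive and the betting variables $\lambda_{s,i}$ lie in $[0,1/(M-\alpha))$, which together with $\ell_{s,i}^f \in [-\rho_s,1+\rho_s]$ ensures every factor $1-\lambda_{s,i}(\ell_{s,i}^f-\alpha)$ is strictly positive, so each $E_{s,i-1}>0$.
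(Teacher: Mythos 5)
Your proposal is correct and is essentially the same argument as the paper's own proof: both substitute the update rule so the product in \eqref{eq:general_E_n1} telescopes to the mixture identity $E_n=\sum_{s=1}^S w_{s,0}E_{s,n}$, and then use $E_n\geq w_{s,0}E_{s,n}$ for each $s$ to obtain the bound. Your added remark on strict positivity of the factors is a minor technical refinement, not a different route.
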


Intuitively, this result implies that the weight update (\ref{eq:rebalance}) can identify the best factor $\rho_s$ in the set $\{\rho_s\}_{s=1}^S$, determining an optimal level of reliance on autoevaluated data. In fact, the first term in (\ref{eq:regret_weight}) represents the maximum, i.e., most informative, e-value among all the e-values $\{E_{s,n}\}_{s=1}^S$  corresponding to the $S$ candidate factors $\{\rho_s\}_{s=1}^S$.  This result, in turn, suggests that \texttt{R-AutoEval+} may be able to outperform both \texttt{R-Eval} and \texttt{R-AutoEval}, reducing to the former when autoevaluated data is of poor quality and to the latter when the autoevaluated data is sufficiently accurate. The following result formalizes this intuition.

\begin{theorem}[Sample complexity of \texttt{R-AutoEval+}] \label{theor:2} For every $s=1,...,S$, suppose that the betting strategy $\lambda_{s,i}$ satisfies assumptions (A1) and (A2) in Theorem~\ref{thm:r_eval} with $E_{s,n}$ in lieu of $E_n$ and $q_i=\ell_{s,i}^f$. Then, the sample complexity of \texttt{R-AutoEval+} satisfies the following limit
\begin{align} \label{eq:r_autoeval_p_sample_compl}
    \lim_{\delta\rightarrow 0^+} \frac{n_\text{min}^{\texttt{R-AutoEval+}}(\delta)}{\log(1/\delta)} \leq  \min_{s=1,...,S} \bigg\{\frac{1}{   g_{s,\star}  } \bigg\}.
\end{align}
\end{theorem}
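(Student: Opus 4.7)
The plan is to leverage the pathwise regret bound of Lemma~\ref{lemma:sublinear_regret_weight} to relate the composite e-value $E_n$ in (\ref{eq:general_E_n1}) to each single-strategy e-value $E_{s,n}$ defined in (\ref{eq:per_rho_E_n}), and then apply the Waudby-Smith sample-complexity result (Theorem~\ref{thm:r_eval}) separately to each $E_{s,n}$. Because $S$ is finite and the regret inequality contributes only an additive constant to $\log E_n$, this constant is dominated by $\log(1/\delta)$ as $\delta\to 0^+$, so the best per-strategy rate $1/g_{s,\star}$ determines the asymptotic sample complexity.

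Concretely, I would proceed in three steps. \textbf{Step 1 (stopping-time reduction):} let $w_{\min,0}=\min_{s'}w_{s',0}>0$. Lemma~\ref{lemma:sublinear_regret_weight} gives, pathwise, $\log E_n\geq \log E_{s,n}-\log(1/w_{\min,0})$ for every $s$, so the event $\{E_{s,n}\geq 1/(w_{\min,0}\,\delta)\}$ implies $\{E_n\geq 1/\delta\}$. Hence $\min\{n:E_n\geq 1/\delta\}\leq \min\{n:E_{s,n}\geq 1/(w_{\min,0}\,\delta)\}$, and taking conditional expectations under $R\leq\alpha$ yields $n_{\min}^{\texttt{R-AutoEval+}}(\delta)\leq n_{\min}^{(s)}(w_{\min,0}\,\delta)$, where $n_{\min}^{(s)}$ denotes the sample complexity of the test based on $E_{s,n}$ alone. \textbf{Step 2 (per-strategy asymptotics):} for each fixed $s$, the effective observations $\{\ell_{s,i}^f\}_i$ defined in (\ref{eq:effectiveobs}) are i.i.d.\ unbiased estimates of $R$ (the $r$ unlabeled samples used at round $i$ are disjoint across rounds) and bounded in $[-\rho_s,1+\rho_s]$. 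Under the assumed validity of (A1) and (A2) for $\lambda_{s,i}$ applied to $E_{s,n}$, Theorem~\ref{thm:r_eval} yields $\lim_{\delta'\to 0^+} n_{\min}^{(s)}(\delta')/\log(1/\delta')=1/g_{s,\star}$. \textbf{Step 3 (absorb the constant and minimize):} substituting $\delta'=w_{\min,0}\,\delta$ and using $\log(1/\delta')/\log(1/\delta)\to 1$ as $\delta\to 0^+$, the additive shift $\log(1/w_{\min,0})$ becomes negligible, so $\limsup_{\delta\to 0^+} n_{\min}^{\texttt{R-AutoEval+}}(\delta)/\log(1/\delta)\leq 1/g_{s,\star}$ for each $s$. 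Since $S$ is finite, one may take the minimum over $s\in\{1,\ldots,S\}$ inside the limit, delivering (\ref{eq:r_autoeval_p_sample_compl}).

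The main subtlety is in Step~1: the regret bound from Lemma~\ref{lemma:sublinear_regret_weight} must hold \emph{pathwise} rather than only in expectation, so that it translates into a deterministic stopping-time inequality and in turn into a sample-complexity inequality. This pathwise property is automatic for the exponential-weights aggregation used in (\ref{eq:rebalance}). A secondary, milder check is that Theorem~\ref{thm:r_eval} applies cleanly to each $E_{s,n}$; this reduces to the i.i.d.\ and boundedness properties of the observations in (\ref{eq:effectiveobs}), both of which follow immediately from the disjoint-chunk construction. With these ingredients, the theorem is obtained by essentially routine composition of Lemma~\ref{lemma:sublinear_regret_weight} and Theorem~\ref{thm:r_eval}, with no further probabilistic argument required.
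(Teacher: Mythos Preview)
Your proposal is correct and takes a more elementary route than the paper. The paper instead re-runs the Waudby-Smith--Ramdas argument from scratch: it fixes $\bar s=\arg\max_s g_{s,\star}$, constructs a deterministically smaller surrogate $E_n^{\texttt{+}}=\sum_s w_{s,0}\min\{E_{s,n},E_{\bar s,n}\}\le E_n$, establishes for this surrogate both a sublinear regret bound against $E_{\bar s,n}(\lambda_{\bar s,\star})$ and a numeraire-type inequality $\mathbb{E}[E_n^{\texttt{+}}/E_{\bar s,n}(\lambda_{\bar s,\star})]\le 1$, and then bounds the stopping time via the tail-sum decomposition from \citep[B.4]{waudby2025universal}. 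Your argument is shorter because it exploits the pathwise identity $E_n=\sum_s w_{s,0}E_{s,n}$ (which appears inside the proof of Lemma~\ref{lemma:sublinear_regret_weight}) to obtain the deterministic stopping-time inequality $\min\{n:E_n\ge 1/\delta\}\le\min\{n:E_{s,n}\ge 1/(w_{s,0}\delta)\}$ and then invokes Theorem~\ref{thm:r_eval} as a black box at the shifted level; the additive $\log(1/w_{\min,0})$ indeed disappears in the ratio as $\delta\to0^+$. What the paper's route buys is an explicit non-asymptotic upper bound on $n_{\min}^{\texttt{+}}(\delta)$ before passing to the limit, and it in fact only uses (A1)--(A2) at the single index $\bar s$; your route delivers the asymptotic statement of the theorem under the hypotheses as written with essentially no new probabilistic work.
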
 
\begin{proof}
See Appendix~\ref{app:pf_our_thm}. 
\end{proof} The limit (\ref{eq:r_autoeval_p_sample_compl}) implies the main result (\ref{eq:main_results_informal}) in Theorem~\ref{theor:informal}.  In fact, by (\ref{eq:ws_univ_thm}), the ratio $1/g_{s,\star}$ in (\ref{eq:r_autoeval_p_sample_compl}) corresponds to the scaling of the sample complexity of \texttt{R-Eval} and \texttt{R-AutoEval} by setting $s=1$ and $s=S$, respectively. The next example shows that the inequality in (\ref{eq:main_results_informal}) 
\begin{wrapfigure}{r}{0.33\textwidth}
\centering
\vspace{-0.1cm}
\includegraphics[width=0.32\textwidth]{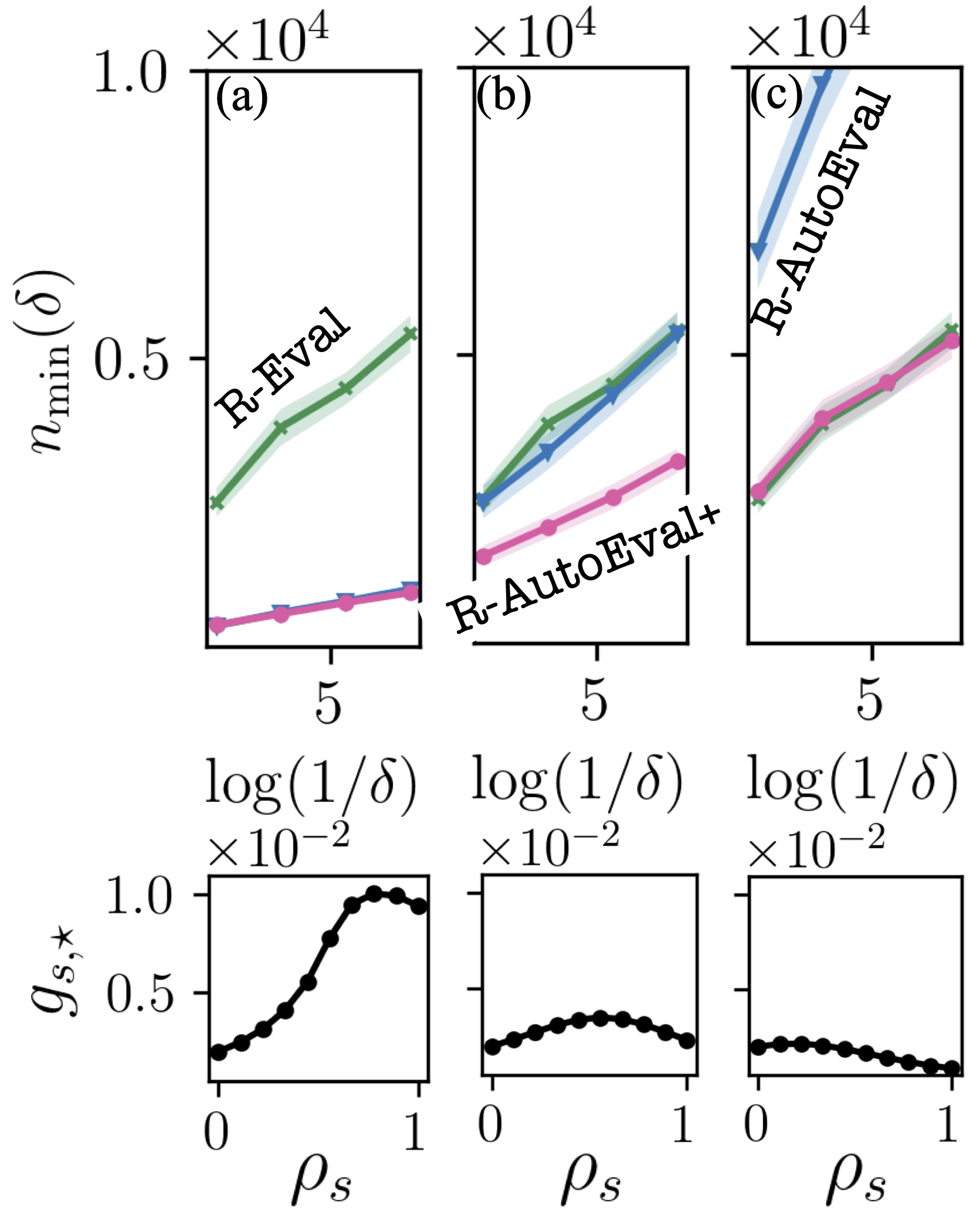}
    \caption{Sample complexity as a function of $\log(1/\delta)$ (top) and maximum expected increment of the log-e-value $g_{s,\star}$ as a function of $\rho_s$ (bottom) for Example~\ref{example_2}.}
    \vspace{-0.6cm} \label{fig:toy_theory}
\end{wrapfigure}  
can be strict.
\begin{example} \label{example_2} 
 Consider again the example
setting in  Example~\ref{example}. The sample complexity $n_\text{min}(\delta)$ is plotted in the top part of Fig.~\ref{fig:toy_theory} as a function of $\log(1/\delta)$ for $\alpha=0.12, S=10$, $r=10$, and for (a) $\gamma=0.99$, (b) $\gamma=0.9$, and (c) $\gamma=0.7$. The results are averaged over $100$ independent experiments and we use the universal portfolio betting strategy (see Appendix~\ref{app:UP}) \citep{cover1991universal, cover2002universal, waudby2025universal}. The figure confirms the linear trend of the sample complexity with respect to the term $\log(1/\delta)$. Furthermore, it shows that the inequality (\ref{eq:main_results_informal}) can indeed be strict, with \texttt{R-AutoEval+} outperforming both \texttt{R-Eval} and \texttt{R-AutoEval}.

The bottom part of the figure plots the maximum expected logarithmic increment of the e-value, $g_{s,\star}$, in (\ref{eq:r_autoeval_p_sample_compl}) as a function of the factor $\rho_s$ for (a) $\gamma=0.99$, (b) $\gamma=0.9$, (c) $\gamma=0.7$. As the autoevaluator becomes less (more) reliable, i.e., as $\gamma$ decreases (increases), the maximum value of the expected increment $g_{s,\star}$ is obtained for values of $\rho_s$ closer to zero (one), making \texttt{R-AutoEval+} behave as \texttt{R-Eval} (\texttt{R-AutoEval}).  
\end{example}

More generally, using the approximation (\ref{eq:g_star_Tayler}), one can conclude that the sample complexity of  \texttt{R-AutoEval+} is strictly smaller than for \texttt{R-Eval} and \texttt{R-AutoEval} as long as the variance of the effective observation $\ell_{s,i}^f$ in (\ref{eq:effectiveobs}) for some index $s$ different from $s=1$ and $s=S$ is strictly smaller than for the effective  observations $\ell_i$  and $\ell_{i}^f$ in  (\ref{eq:hat_R_t}). As shown in \citep[Example 6.1]{angelopoulos2023ppi++}, this condition is satisfied when the autoevaluator is sufficiently accurate.

\section{\texorpdfstring
  {Experimental Results\protect\footnote{Code is available at \protect\url{https://github.com/kclip/R_AutoEval_plus}.}}
  {Experimental Results: From LLM Quantization to LLM Prompting}
}\label{sec:experiments}

For experimental validation, we consider three model selection applications: 1) selecting the lightest quantized LLM with guaranteed performance drop as compared to the baseline model on the TriviaQA data set \citep{joshi2017triviaqa} (see  Fig.~\ref{fig:overall}); 2) selecting the shortest prompt template  for an LLM with guaranteed accuracy on the Instruct-Induction task \citep{honovich2022instruction}; and 3) test-time reasoning budget allocation with guaranteed performance enhancement on the GSM8K data set \citep{cobbe2021training}.    We set $S=10$ with $\rho_s$ being uniformly spaced in the range $[0,1]$ and choose initial weights as $w_{s,0}=1/S$. We refer to  Appendix~\ref{app:experiments} for results with different choices of such hyperparameters. All the  results in this section are reported after averaging over $100$ independent experiments, and 2 H100 GPUs are used for LLM executions.


 \begin{figure}[t]
  \begin{center}\includegraphics[scale=0.08]{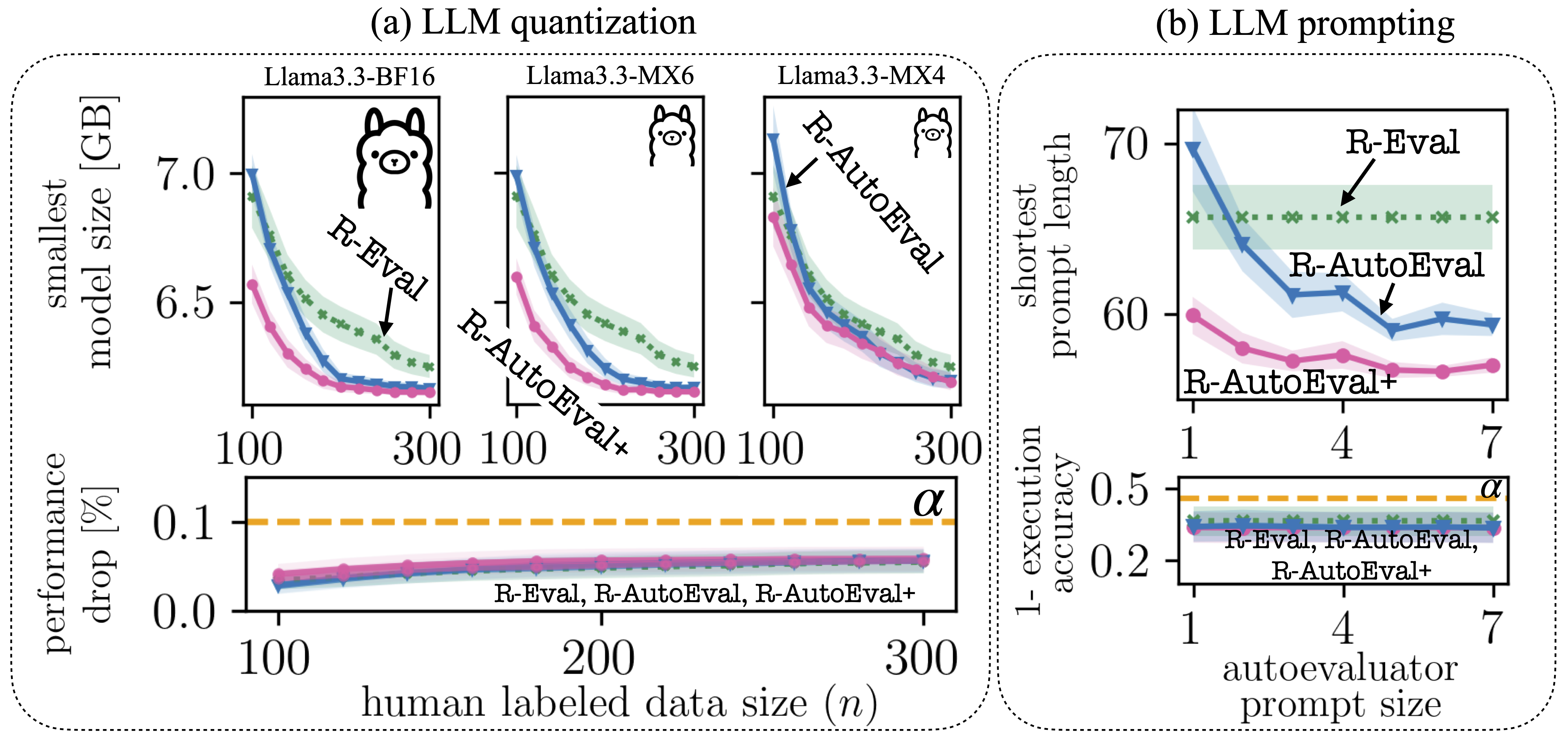}
    \end{center}
    \caption{ Risk-controlling model selection using \texttt{R-Eval} \citep{waudby2024estimating, bates2021distribution, angelopoulos2021learn}, \texttt{R-AutoEval} \citep{einbinder2024semi}, and the proposed \texttt{R-AutoEval+} for the problems of (a) selecting the lightest quantized LLM with guaranteed performance drop on the TriviaQA data set \citep{joshi2017triviaqa}, and (b) selecting the shortest prompt template with guaranteed execution accuracy on the Instruction-Induction task \citep{honovich2022instruction}. (a, top) Size of the smallest selected model versus the number $n$ of real-world data points, and (a, bottom) corresponding performance drop. (b, top) Length of the shortest selected prompt template versus the number of in-context samples used by the  autoevaluator, and (b, bottom) corresponding complement of the execution accuracy. } 
    \label{fig:all_experiments}
 \end{figure}

\textbf{1) Selecting quantized LLMs:} 
As shown in Fig.~\ref{fig:overall}, for this task the candidate set of models consists of LLMs obtained from the baseline Llama-3.1-8B-Instruct by applying quantization in the MX format  \citep{darvish2023shared}. The candidate formats have  different configurations specified by parameters $[k_1, k_2, d_1,d_2, {\scriptstyle \mathcal{M}}] \in \mathbb{N}^5$ where $k_1, k_2$ are the first, second block granularity levels, $d_1, d_2$ are the first, second  scale bit-width levels,  and ${\scriptstyle \mathcal{M}}$ is the mantissa bit-width \citep[Table II]{darvish2023shared}. We set $k_1\in\{16,64\}, d_1=8, d_2=1, {\scriptstyle \mathcal{M}} \in \{3,4,...,10\}$ with $k_1/k_2=\{2,4,8\}$. Model selection is carried out based on fixed sequence testing (FST) \citep{bauer1991multiple}, visiting  the candidates in order of decreasing average bitwidth. FST guarantees the family-wise error rate (FWER) with error probability no larger than $\delta$.

For the autoevaluator, we adopt a larger Llama-3.1-70B-Instruct \citep{grattafiori2024llama}, whose quality is controlled by adjusting the weight precision of the autoevaluator from full-precision BF16 to MX6/MX4, with average bitwidth decreasing from 16 to 6/4  \citep{rouhani2023microscaling}.   We set $\delta=0.1$, $\alpha=0.1$, $n=150$, and $r=5$ for Fig.~\ref{fig:overall} while vary $n$ from $100$ to $300$ with $r=3$ for Fig.~\ref{fig:all_experiments}. Fig.~\ref{fig:overall} reports maximum values within the 1.5 interquartile
range (IQR) range \citep{mcgill1978variations}  across $500$ independent experiments (the corresponding full box plot can be found in Appendix~\ref{app:experiments}). 

Fig.~\ref{fig:all_experiments}(a) plots the size of the smallest selected model versus the number $n$ of real-world data points (top) and the corresponding performance drop against the full-precision model (bottom). The figure confirms that all schemes do not exceed the target maximum performance drop $\alpha=0.1$. Furthermore, \texttt{R-AutoEval+} outperforms  both \texttt{R-Eval} and \texttt{R-AutoEval} when the autoevaluator is  sufficiently accurate  (Llama3.3-BF16/MX6), while recovering \texttt{R-Eval} when the autoevaluator is of low quality (Llama3.3-MX4), thus confirming the efficiency guarantee of \texttt{R-AutoEval+} in Theorem~\ref{theor:2}. 

\textbf{2) Selecting the shortest prompt template:} For this second task, the candidate set consists of 25 prompt templates  designed to enhance the zero-shot performance of Llama-3.1-8B-Instruct using the larger Llama-3.1-70B-Instruct \citep{grattafiori2024llama} via the forward mode generation of automatic prompt engineering (APE) \citep{zhou2022large}. Model selection is carried out via the Bonferroni correction \citep{bonferroni1936teoria}, thus applying the test (\ref{eq:testing_from_E}) with $\delta/25$ in lieu of $\delta$.

For the autoevaluator, we adopt in-context learning  \citep{brown2020language} on the Llama-3.1-70B-Instruct with prompt examples randomly chosen from the same held-out data set used for APE. The accuracy of the autoevaluator is controlled by varying the number of prompt examples from $1$ to $7$.  We set $\delta=0.1, n=200, r=9$, with  $\alpha$ chosen as the minimum value in the set $\{0.05,0.1,...,0.95\}$ for which \texttt{R-AutoEval} \citep{einbinder2024semi} finds at least one reliable prompt template with the strongest autoevaluator. We select the longest prompt template if the model selection algorithm does not select any template.

Fig.~\ref{fig:all_experiments}(b) plots the length of the shortest selected prompt template versus the number of in-context samples used for autoevaluator (top) and the corresponding risk defined as the complement of the execution accuracy as defined in \citep[A.1]{honovich2022instruction}. The figure  confirms that all schemes reach execution accuracy no smaller than $1-\alpha$. Moreover,  \texttt{R-AutoEval+} consistently outperforms both \texttt{R-Eval} and \texttt{R-AutoEval} irrespective of the number of in-context samples used for the autoevaluator.

\textbf{3) Test-time reasoning budget allocation:} For the last task, the candidate set consists of computation budgets for the reasoning mode of the Qwen3-1.7B \citep{yang2025qwen3} base model,  varying between 128 and 1280 tokens. Model selection is carried out via FST, visiting the candidates in order of decreasing reasoning budget.  For the autoevaluator, we adopt different kinds of pre-trained LLMs, ranging from large-scale models such as GPT-4.1 \citep{achiam2023gpt} to light-weight models such as BitNet b1.58 \citep{ma2025bitnet}. We set $\delta=0.1, n=1000, r=4,$ and $\alpha=0.03$ (i.e., reasoning should improve accuracy by at least 3\%).

Table~\ref{tab:main} confirms again the efficiency gain of \texttt{R-AutoEval+} as compared to \texttt{R-Eval} and \texttt{R-AutoEval}, saving up to 127 tokens over \texttt{R-Eval} and up to 66 tokens over \texttt{R-AutoEval} on average. Choosing the autoevaluator from the same family of the model is seen to substantially reduce the gain of autoeval-based approaches. For instance, Llama-3.2-3B-Instruct autoevaluator achieves much lower accuracy than Qwen3-32B autoevaluator (66\% vs. 82\%) but it significantly helps reducing the number of reasoning tokens:  \texttt{R-AutoEval+} saves 90 tokens over \texttt{R-Eval} when using Llama-3.2-3B-Instruct autoevaluator, while it saves 42 tokens over \texttt{R-Eval} when using Qwen3-32B autoevaluator; \texttt{R-AutoEval} saves 83 tokens over \texttt{R-Eval} when using Llama-3.2-3B-Instruct autoevaluator, while it requires 24 tokens more than \texttt{R-Eval} when using  Qwen3-32B autoevaluator. Such behavior can be understood as a consequence of positive feedback of LLM judges within the same family \citep{zheng2023judging}, also known as preference leakage \citep{li2025preference}, which makes the bias correction (\ref{eq:hat_R_t}) more challenging. 

\begin{table}
  \caption{ Selecting the smallest reasoning budget for Qwen3-1.7B that ensures at least 3\% accuracy enhancement as compared to its non-reasoning mode, evaluated on GSM8K data set \citep{cobbe2021training}: average number of generated tokens with standard deviation shown within parentheses.} 
  \label{tab:main}
  \centering
\begin{tabular}{llll}
\toprule
\textbf{autoevaluator (accuracy)} & \texttt{R-Eval} & \texttt{R-AutoEval} & \texttt{R-AutoEval+} \\
\midrule
BitNet b1.58 (35\%)          & \multirow{6}{*}{983.34 (137.87)} & 950.27 (152.84)  & \textbf{942.47 (135.65)} \\
Llama-3.2-3B-Instruct (66\%) &                                 & 900.58 (122.70)  & 
\textbf{892.86 (112.10)} \\
Qwen3-32B (82\%)             &                                 & 1007.45 (150.48) & \textbf{941.20 (129.61)} \\
DeepSeek-R1-Distill-Qwen-32B (89\%) & & 893.39 (105.13) & \textbf{866.22 (80.58)} \\ 
Llama-3.3-70B-Instruct (89\%) && 854.42 (90.26) & \textbf{847.05 (69.21)} \\
GPT-4.1 (93\%)               &                                 & 883.99 (103.00)  & \textbf{856.13 (70.93)} \\
\bottomrule
\end{tabular}
\end{table}

We refer to Appendix~\ref{app:experiments} for further details and additional experiments.

\section{Conclusion and Further Discussions} 

This work introduced \texttt{R-AutoEval+}, a novel autoevaluation method that can  provably enhance the efficiency of the state-of-the-art evaluation method while maintaining strict finite-sample  reliability guarantees. The theoretical properties of \texttt{R-AutoEval+} were confirmed by experimental results on LLM quantization and LLM prompting with LLM judges as autoevaluators. 

Some limitations of this work are  as follows: (\emph{i}) \texttt{R-AutoEval+} requires access to real-world  unlabeled data; (\emph{ii}) the discrete set of  candidate factors determining reliance on synthetic data are fixed \emph{a priori}; and lastly, (\emph{iii}) the sample efficiency guarantee in Theorem~\ref{theor:2}  only holds for sufficiently high target reliability levels $1-\delta$.  Addressing these limitations may leverage the tools in \citep{angelopoulos2023prediction, cover2002universal, orabona2023tight}, and we leave these directions to future work.  

Another interesting direction for future research includes combining the benefits of \texttt{R-AutoEval+} in adaptively weighting synthetic data with the complementary advantages of methods that actively select real data \citep{
xu2024active, cereval24, zrnic2024active, csillag2025prediction, xu2025active}.

\section*{Acknowledgments}
This work was supported by the European Union’s Horizon Europe project CENTRIC (101096379). The work of Osvaldo Simeone was also supported by the Open Fellowships of the EPSRC (EP/W024101/1) and by the EPSRC project (EP/X011852/1). Thanks also to the Advanced Research and Invention Agency (ARIA) for supporting broader foundational work in this space.  The authors also thank Bipin Rajendran for his advice on LLM quantization.
\bibliography{ref}
\bibliographystyle{plainnat}

\appendix

\section{Further Related Work}  \label{app:related_work}

\textbf{Active evaluation.} While standard \texttt{Eval} leverages a fixed real-world data set, active evaluation aims at reducing the labeling cost by adaptively choosing data for human labeling \citep{cohn1996active, cereval24}.  Active evaluation has also been extended to \texttt{AutoEval} \citep{zrnic2024active, xu2024active, csillag2025prediction}. This line of work is orthogonal to the solution proposed in our work, which focuses on adaptively weighting synthetic data rather than adaptively selecting real-world data. 

\textbf{Game-theoretic statistics.} 
Game-theoretic statistics builds on the notion of e-value. E-values are measures of evidence that offer several advantages over conventional p-values, including the support of optional continuation \citep{ramdas2023game, ramdas2024hypothesis}. The interpretation of an e-value as the wealth associated with a betting strategy makes it possible to optimize test statistics using tools from online convex optimization, leading to state-of-the-art estimation and  testing mechanisms \citep{vovk2021values, waudby2024estimating, bates2021distribution}. Moreover, active e-values \citep{xu2024active} support the implementation of active evaluation \citep{csillag2025prediction, xu2025active}.

\section{Algorithm} \label{app:algo}
Algorithm~\ref{alg:ppe+} summarizes the  overall procedure of \texttt{R-AutoEval+}, which reduces to \texttt{R-Eval} \citep{waudby2024estimating, bates2021distribution} by setting $\rho_s=0$ for all $s=1,...,S$, and to \texttt{R-AutoEval} \citep{einbinder2024semi} by setting $\rho_s=1$ for all $s=1,...,S$. 
In Algorithm~\ref{alg:ppe+}, we consider the test 
 \begin{align} \label{eq:testing_from_E_max}
    T_n=\mathbbm{1}\Big(\max_{i\in\{1,...,n\}} E_i  \geq 1/\delta\Big),
\end{align}
which meets the reliability condition (\ref{eq:type_1}) thanks to Ville's inequality \citep{ville1939etude}. The testing design (\ref{eq:testing_from_E_max}) is standard in testing-by-betting frameworks \citep{waudby2024estimating, bates2021distribution} due to its higher power, as it returns $T_n=1$ any time the test (\ref{eq:testing_from_E}) does.

\begin{algorithm}[h] 
\DontPrintSemicolon
\smallskip 
\KwIn{human-labeled data $\mathcal{D}_n=\{(X_i,Y_i)\}_{i=1}^n$,  unlabeled data ${\mathcal{D}}_N^\text{unl}=\{\tilde{X}_i\}_{i=1}^N$, autoevaluator $f$, target risk level $\alpha$, reliability level $1-\delta$,  reliance factors $\{\rho_s\}_{s=1}^S$ satisfying (\ref{eq:rhoinc}), initial positive weights $\{w_{s,0}\}_{s=1}^S$ satisfying $\sum_{s=1}^S w_{s,0}=1$, betting algorithm $\mathcal{A}lg$ (see Sec.~\ref{app:UP})}
\KwOut{test output $T_n$}
\vspace{0.15cm}
\hrule
\vspace{0.15cm}
{\bf initialize} $E_0 \leftarrow 1$, $E_{s,0}\leftarrow 1$; $\lambda_{s,1} = \mathcal{A}lg(\ell^f_{s,1:0})$ denoting as $\ell^f_{s,1:m}=\{\ell^f_{s,i}\}_{i=1}^m$;  $w_{s,1}=w_{s,0}$ for $s=1,...,S$\\
\vspace{0.15cm}

\For{{\em $i=1,...,n$}}{\vspace{0.1cm}
\For{{\em  $s =1,...,S$ }}{\vspace{0.1cm}
\algcomment{get effective observation $\ell_{s,i}^f$} \\ 
$\ell_{s,i}^f=\frac{\rho_s}{\lfloor N/n \rfloor} \sum_{i'=\lfloor N/n \rfloor  \cdot(i-1)+1}^{{\lfloor N/n \rfloor }\cdot i}\ell(\tilde{X}_{i'},f(\tilde{X}_{i'}))+\ell(X_i,Y_i) -\rho_s \cdot \ell(X_i, f(X_i))$. \\ 
\vspace{0.1cm}
\algcomment{update $E_{s,i}$}\\
$E_{s,i}\leftarrow E_{s,i-1}\cdot \underbrace{(1-\lambda_{s,i}\cdot(\ell_{s,i}^f-\alpha))}_{=e_{s,i}}$ \\ 
\vspace{0.1cm}
\algcomment{update betting variable $\lambda_{s,i+1}$}\\
$\lambda_{s,i+1} = \mathcal{A}lg( \ell^f_{s,1:i} )$ \\
}
\algcomment{update e-value $E_i$}\\
$E_i \leftarrow E_{i-1} \cdot \sum_{s=1}^S  w_{s,i} \cdot e_{s,i}$\\ 
\vspace{0.1cm}
\algcomment{update weights $w_{s,i+1}$}\\
$ w_{s,i+1} = \frac{w_{s,0} \cdot E_{s,i}}{\sum_{s'=1}^S w_{s',0} \cdot E_{s',i}}$ for all $s=1,...,S$. }
\vspace{0.15cm}
\textbf{return} $ \mathbbm{1}\big(\max_{i\in\{1,...,n\}}E_i \geq 1/\delta\big)$ 

\caption{\texttt{R-AutoEval+}}
\label{alg:ppe+}
\end{algorithm}

\section{Betting Strategy} \label{app:UP}
In this section, we summarize commonly used betting strategies \citep{waudby2024estimating, waudby2025universal}, namely Waudby-Smith–Ramdas (WSR) \citep{waudby2024estimating} and universal portfolio (UP) \citep{cover1991universal}. While both schemes show excellent performance empirically, we start with UP, which  satisfies assumption A1 in Theorem~\ref{thm:r_eval}. 

In this section, we denote as $\mathcal{A}lg$ a betting algorithm that maps the available past observations into the next betting variable, i.e., at round $i$, given previous observations $q_{1:i-1}=\{q_j\}_{j=1}^{i-1}$ with $q_i\in[m,M]$, a betting algorithm outputs $\lambda_i=\mathcal{A}lg(q_{1:i-1})$.

\subsection{Universal Portfolio (UP)} \label{subsec:up_detail}
Recall the definition of  $E_n(\lambda)=\prod_{i=1}^n (1-\lambda(q_i-\alpha))$, which is  the e-value (\ref{eq:basic_E_n}) with $\lambda_i=\lambda$ for all $i=1,...,n$. UP \citep{cover1991universal} defines the betting variable $\lambda_i$ at round $i$ as
\begin{align}
    \label{eq:betting_UP}
    \mathcal{A}lg(q_{1:i-1}) = \frac{1}{M-\alpha}\frac{\int_{0}^1 \lambda E_{i-1}(\lambda/(M-\alpha)) \mathrm{d}F(\lambda)}{\int_{0}^1 E_{i-1}(\lambda/(M-\alpha)) \mathrm{d}F(\lambda)},
\end{align}
which satisfies the assumption A1 in Theorem~\ref{thm:r_eval} (see also \citep{waudby2025universal, orabona2023tight}). The integrals in \eqref{eq:betting_UP} are computed by discretizing the continuous domain $[0,1]$ of $\lambda$ into a uniformly spaced grid of size $10000$ in a manner similar to \citep[Sec. 8]{cover1991universal}.

\subsection{Waudby-Smith–Ramdas (WSR)} \label{subsec:WSR}
WSR defines the betting variable $\lambda_i$ at round $i$ as \citep{waudby2024estimating, bates2021distribution, einbinder2024semi}
\begin{align} \label{eq:betting_WSR}
    \mathcal{A}lg(q_{1:i-1}) = \min\bigg\{ \frac{c}{M-\alpha}, \sqrt{\frac{2\log(1/\delta)}{n\hat{\sigma}^2_{i-1}}}\bigg\}, 
\end{align}
where the empirical standard deviation  and empirical mean at round $i-1$ and at round $j$ are respectively defined as
\begin{align}
    \hat{\sigma}_{i-1}^2 = \frac{ \hat{\sigma}_0^2 + \sum_{j=1}^{i-1}( q_j - \hat{\mu}_{j})^2}{i}, \ \ \text{and} \ \ \hat{\mu}_{j} = \frac{ \hat{\mu}_0 + \sum_{k=1}^{j} q_k} {j+1}
\end{align}
with initial guesses  $\hat{\mu}_0=1/2$ and $\hat{\sigma}_0^2=1/4$ \citep{waudby2024estimating, einbinder2024semi}. We set $c=3/4$ following \citep{waudby2024estimating}.

\section{Computational Complexity}
Here we compare the computational complexities of the algorithms considered in this work. Given both the human-labeled (size $n$) and autoevaluated data (size $N$), mean-based approaches (\texttt{Eval} and \texttt{AutoEval}, which do not come with reliability guarantees), require $O(n)$ (for \texttt{Eval}) and  $O(N)$ (for \texttt{AutoEval}) operations for computing the respective average. 

The reliable approaches (\texttt{R-Eval} and \texttt{R-AutoEval}) need additional computation steps for updating the e-values as well as the betting strategies (Sec.~\ref{sec:prelim}). While updating the e-values at each round requires a single multiplication, the betting strategy may be more complex. For example, especially the universal portfolio (UP) strategy with sublinear regret (Assumption A1) has a complexity of order $O(nG)$ where $G$ is the size of the grid used to approximate the integral operation of UP (Sec.~\ref{subsec:up_detail}), although this can be decreased by adopting sampling-based approaches \citep{kalai2002efficient}. 

Lastly, \texttt{R-AutoEval+} requires $S$ times more computation than \texttt{R-AutoEval} due to its consideration of $S$ candidate factors. It is worth emphasizing that, the costs described above are generally negligible with respect to running the autoevaluator (e.g., an LLM judge). The summary of the computation complexities can be found in Table~\ref{tab:computation}.
\begin{table}
  \caption{ Computational complexity comparison of the algorithms considered in this work. }
  \centering
\begin{tabular}{lllll} \toprule  \texttt{Eval} & \texttt{AutoEval} & \texttt{R-Eval} & \texttt{R-AutoEval} & \texttt{R-AutoEval+}\\   \midrule  $O(n)$ & $O(N)$ & $O(n+nG)$ & $O(nr+nG)$  & $O(Snr+SnG)$ \\ \bottomrule \end{tabular} \label{tab:computation}
\end{table}

\section{Additional Experimental Results} \label{app:experiments}
While UP betting provably achieves sublinear regret bound (A1 in Theorem~\ref{thm:r_eval}), WSR betting shows excellent empirical result with significantly lower computational overhead. Furthermore,  existing works on \texttt{R-Eval} \citep{waudby2024estimating, bates2021distribution} and \texttt{R-AutoEval} \citep{einbinder2024semi} adopted WSR as their betting strategy. Accordingly, here we investigate the impact of the choice of betting strategy between WSR and UP (see Sec.~\ref{app:UP} for the summary of WSR and UP). Note that we have used WSR for Fig.~\ref{fig:overall} to maintain consistency with prior art, while for all the other figures, we have used UP, which satisfies assumption A1 in Theorem~\ref{thm:r_eval}. In this section, we provide all the missing figures with the alternative betting strategy choice. After showing all the counterparts of the figures in the main text, we provide additional experiments on LLM quantization for CoQA data set \citep{reddy2019coqa} and on constructing two-sided confidence interval for the setting studied in Examples~\ref{example} and \ref{example_2}. Lastly, we provide an extended version of Table~\ref{tab:main} for LLM reasoning task by accounting a wider range of autoevaluators. We  conclude this section by investigating the impact of \texttt{R-AutoEval+}'s hyperparameters such as $S$, $\{w_{s,0}\}_{s=1}^S$, and $\{\rho_{s}\}_{s=1}^S$, as well as of the ordering of data, on the amount of reasoning tokens that can be saved while ensuring the target 3\% accuracy gain.

\subsection{Toy Example}
In Fig.~\ref{fig:toy_theory_WSR},  we provide the counterpart to Figs.~\ref{fig:portfolio} and~\ref{fig:toy_theory}, considering a WSR betting strategy instead of the UP one. It is observed that WSR betting makes the weights of \texttt{R-AutoEval+}  less concentrated around the optimal reliance factor, although it still outperforms the other two schemes with a substantial margin. 

 \begin{figure}[t]
\begin{center}\includegraphics[scale=0.07]{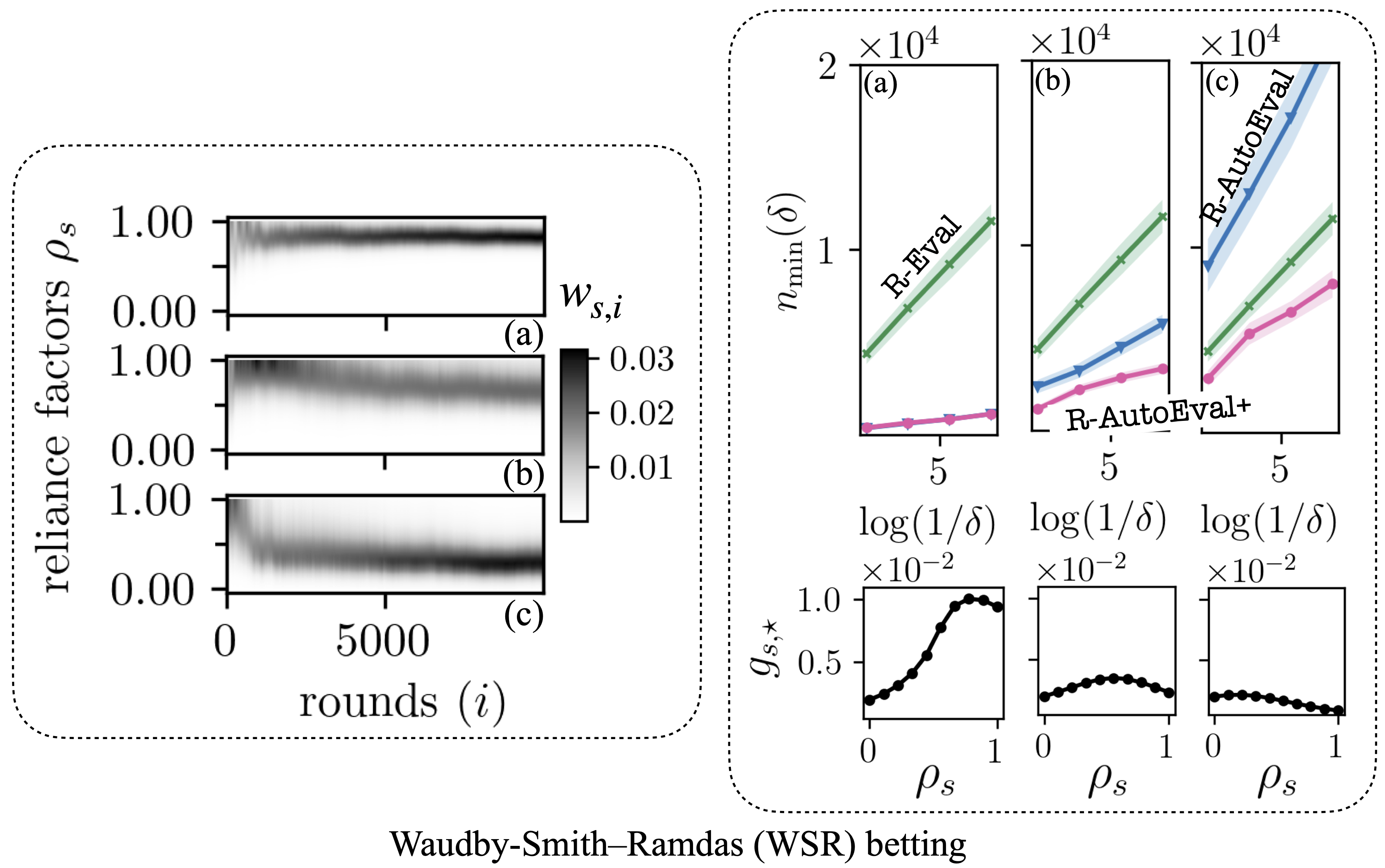}
    \end{center}
    \caption{Same setting with Figs.~\ref{fig:portfolio} and \ref{fig:toy_theory} but with WSR betting instead of UP betting. }
    \label{fig:toy_theory_WSR}
 \end{figure}

\subsection{LLM Quantization}
Fig.~\ref{fig:box}(a) shows the box plot for Fig.~\ref{fig:overall}.  Fig.~\ref{fig:overall} reports maximum values within the 1.5 IQR range \citep{mcgill1978variations}. Fig.~\ref{fig:box}(b) plots the same box plot but with UP betting instead of WSR betting. Fig.~\ref{fig:all_experiments_WSR}(a) is the counterpart of Fig.~\ref{fig:all_experiments}(a) which uses WSR betting in lieu of UP betting. It is observed that even under the WSR betting strategy, \texttt{R-AutoEval+} consistently outperforms \texttt{R-Eval} and \texttt{R-AutoEval}, returning a smaller quantized model.

 \begin{figure}[t]
\begin{center}\includegraphics[scale=0.07]{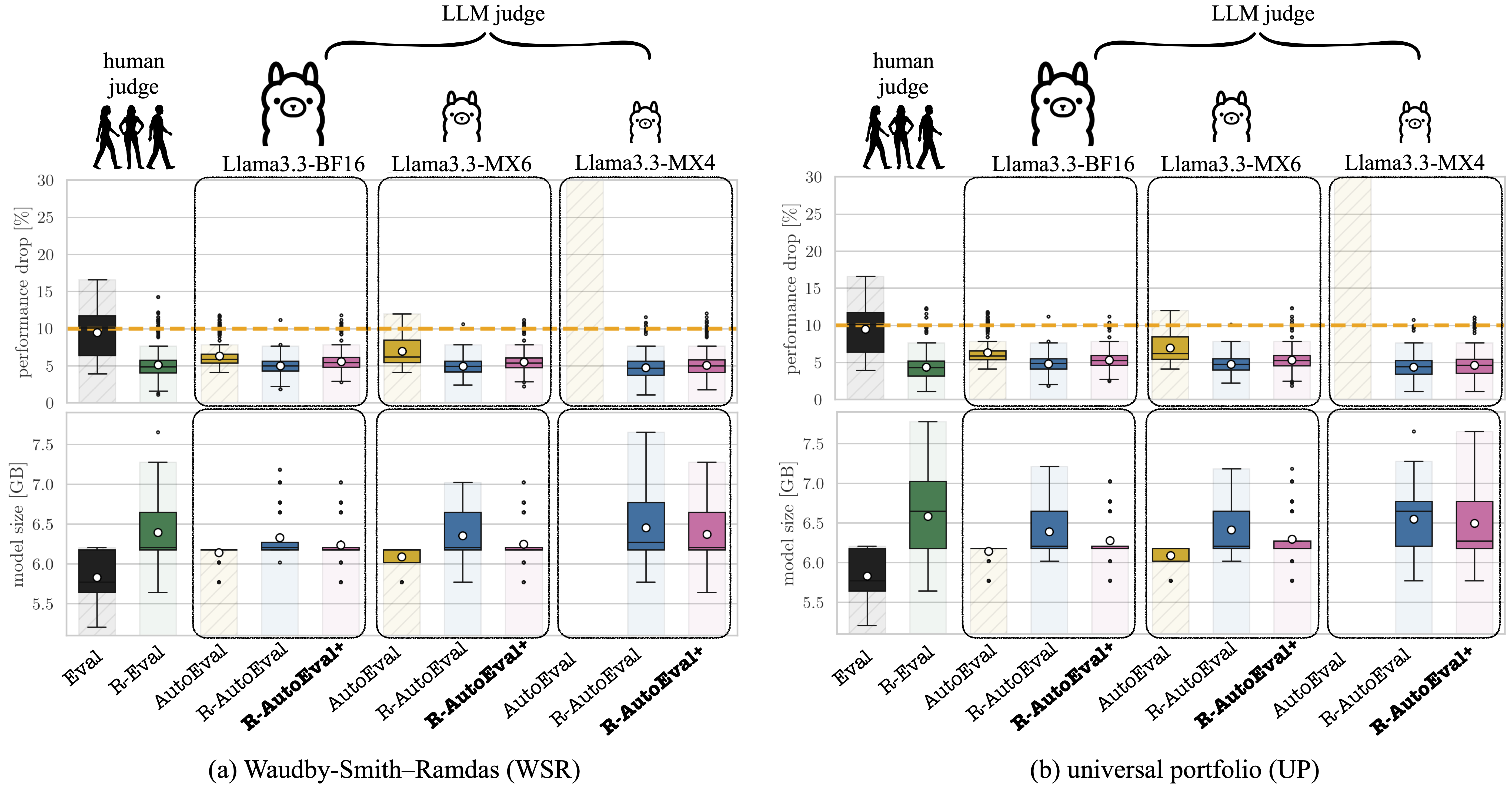}
    \end{center}
    \caption{(a) Box plot for Fig.~\ref{fig:overall}; (b) same setting as (a) but with UP betting strategy (see Sec.~\ref{subsec:up_detail}). }
    \label{fig:box}
 \end{figure}

 \begin{figure}[t]
\begin{center}\includegraphics[scale=0.07]{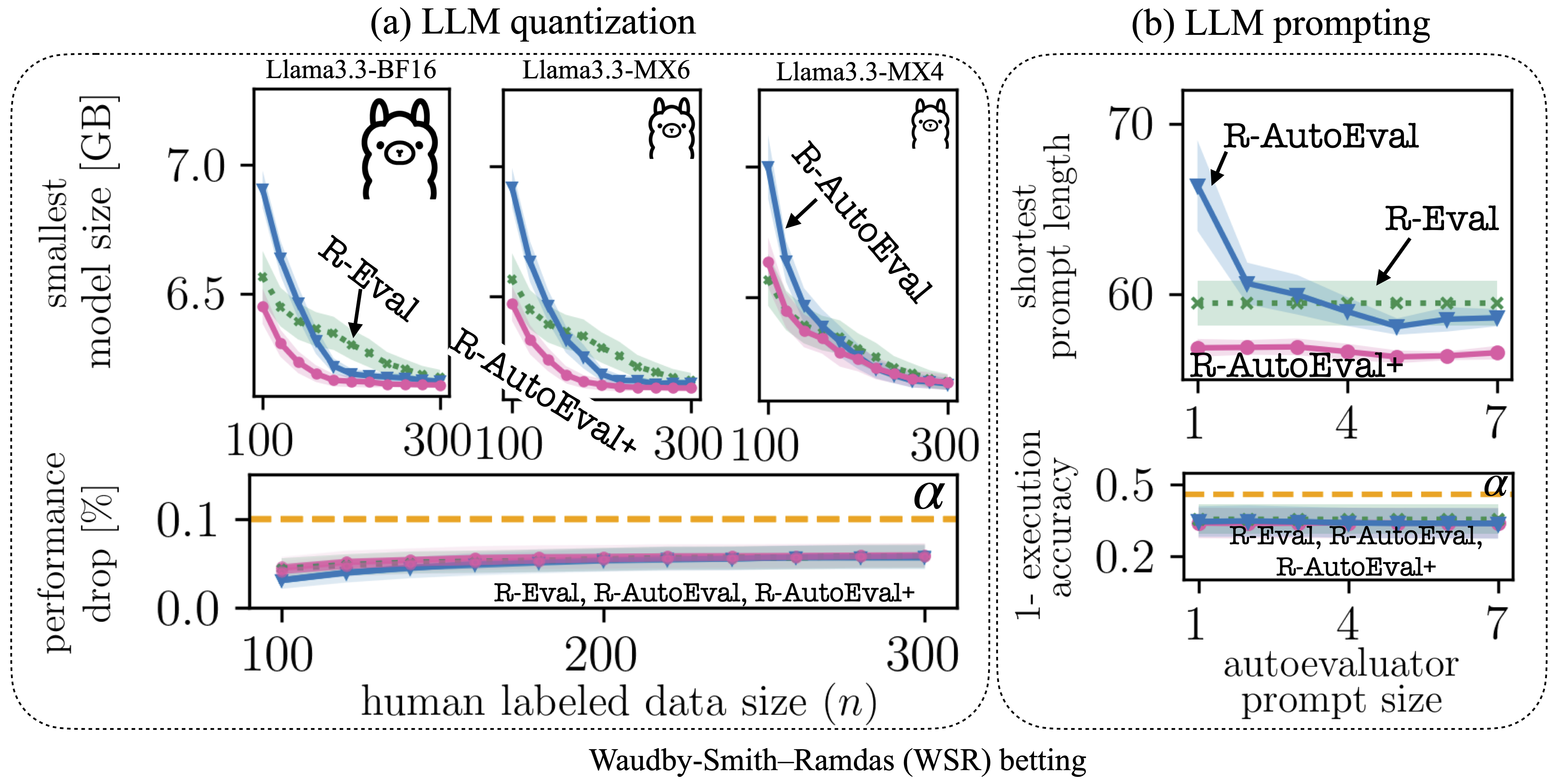}
    \end{center}
    \caption{Same setting as Fig.~\ref{fig:all_experiments} but with WSR betting strategy (see Sec.~\ref{subsec:WSR}). }
    \label{fig:all_experiments_WSR}
 \end{figure}

   \begin{figure}[t]
\begin{center}\includegraphics[scale=0.07]{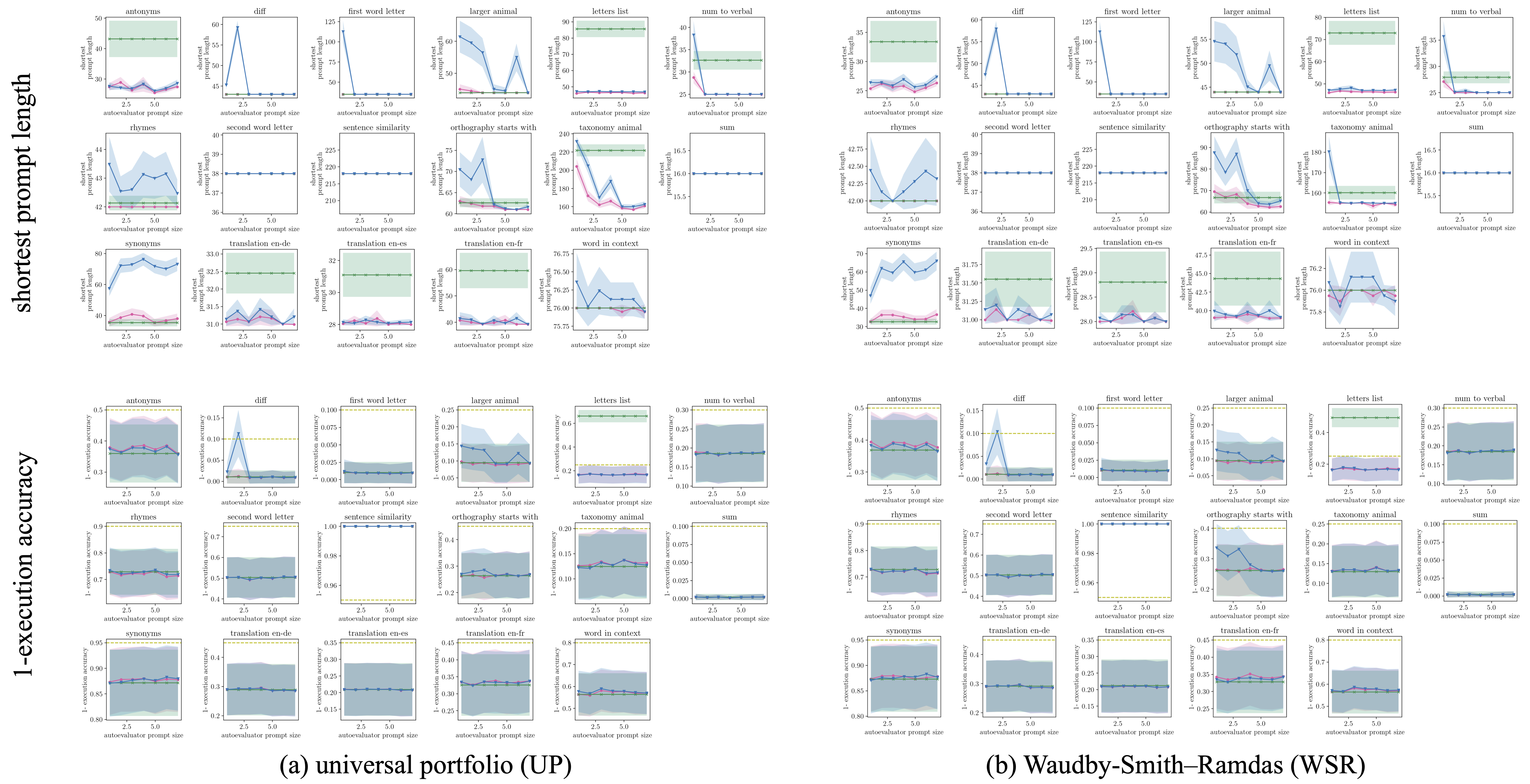}
    \end{center}
    \caption{(a) Per-task result for Fig.~\ref{fig:all_experiments}(b); (b) same setting as (a) but with WSR betting strategy. }
    \label{fig:prompt_all}
 \end{figure}

\subsection{LLM Prompting}
Fig.~\ref{fig:all_experiments_WSR}(b) is the counterpart of Fig.~\ref{fig:all_experiments}(b) which uses WSR betting in lieu of UP betting. Fig.~\ref{fig:prompt_all} then shows per-task results of Instruction-Induction task \citep{honovich2022instruction} with (a) UP betting and (b) WSR betting. We  show $17$ tasks among $24$ tasks that have more than $2000$ examples \citep{honovich2022instruction}. Even under the different choice of betting strategy, \texttt{R-AutoEval+} outperforms both \texttt{R-Eval} and \texttt{R-AutoEval}.

  \begin{figure}[t]
\begin{center}\includegraphics[scale=0.055]{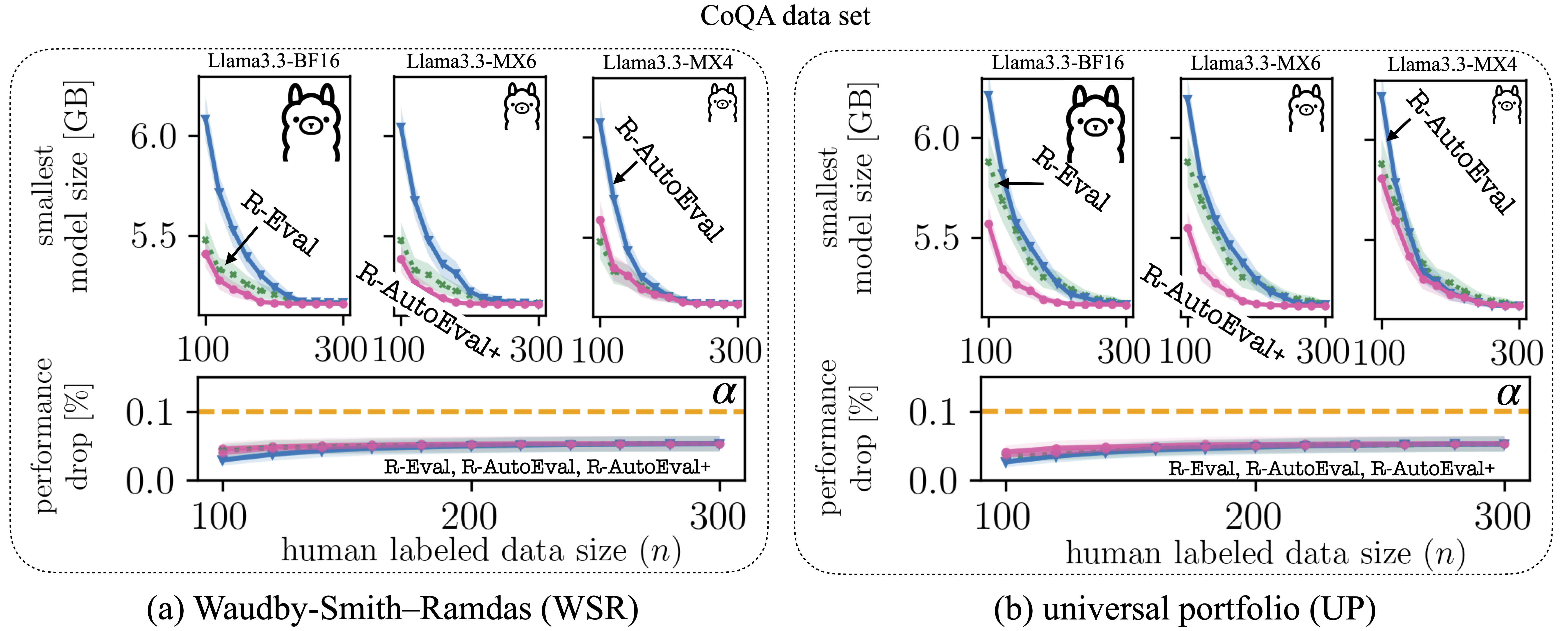}
    \end{center}
    \caption{LLM quantization task on CoQA data set \citep{reddy2019coqa}. All the other settings are the same with Fig.~\ref{fig:all_experiments}.}
    \label{fig:coqa}
 \end{figure}

\subsection{LLM Quantization: CoQA Data Set}
We extend the experimental validation of \texttt{R-AutoEval+} by considering a  different data set for LLM quantization task. Fig.~\ref{fig:coqa} is the counterpart of Fig.~\ref{fig:all_experiments}(a) but with CoQA data set \citep{reddy2019coqa} under (a) WSR betting and (b) UP betting. The same trend is observed, confirming the improved efficiency of \texttt{R-AutoEval+}.

\begin{figure}[t]
\begin{center}\includegraphics[scale=0.07]{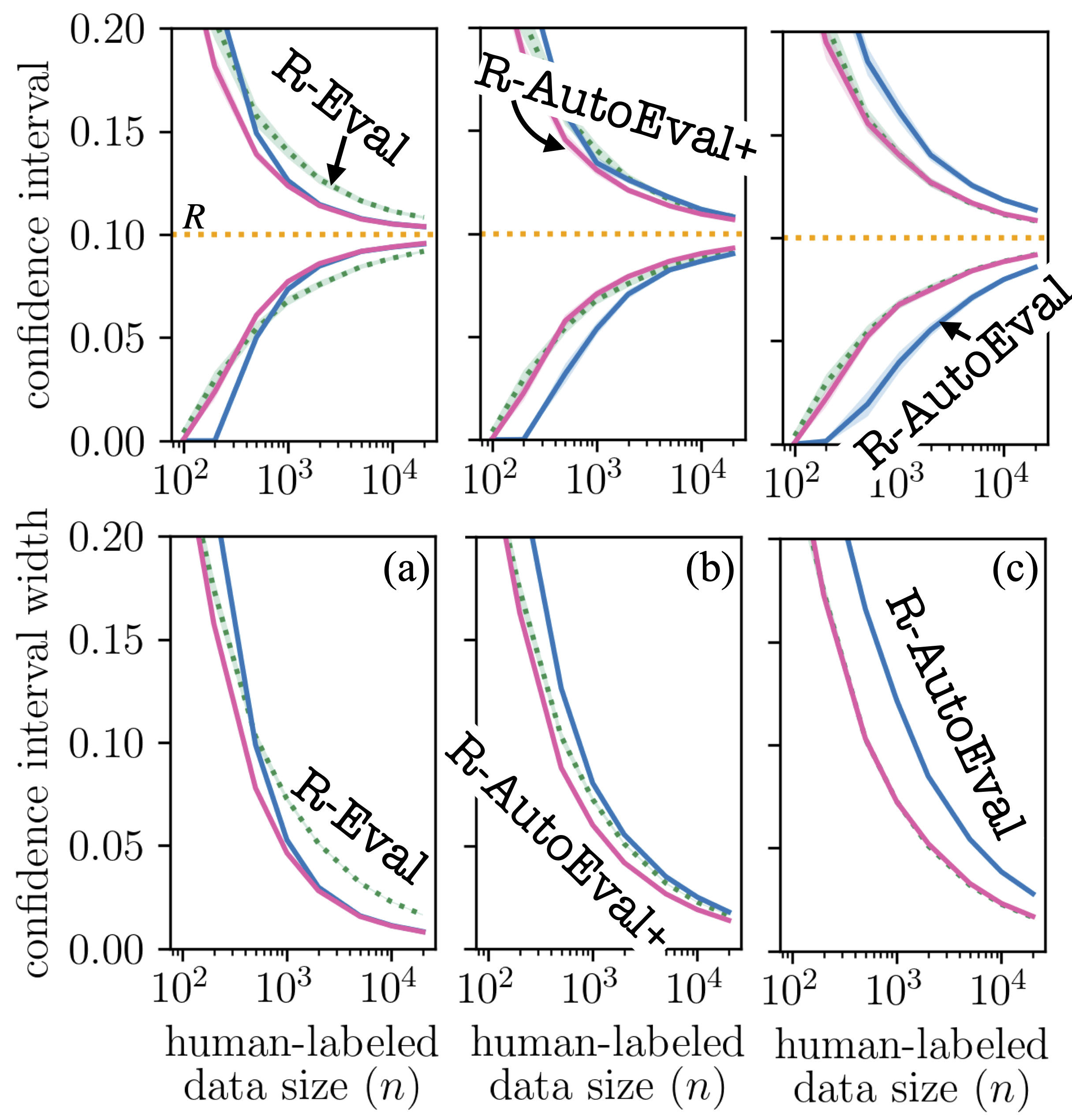}
    \end{center}
    \caption{$99.9\%$ two-sided confidence interval for the same setting in Example~\ref{example_2}.}
    \label{fig:CI}
 \end{figure}

\subsection{Confidence Interval}
Under the same setting as Examples \ref{example} and \ref{example_2}, we consider the construction of two-sided confidence interval for the expected risk $R$ using \texttt{R-Eval}, \texttt{R-AutoEval}, and \texttt{R-AutoEval+}. We follow the \emph{hedged capital process} \citep[Sec. 4.4]{waudby2024estimating} to first construct $(1-\delta\cdot\varepsilon)$-upper confidence bound and $(1-\delta\cdot(1-\varepsilon))$-lower confidence bound separately to yield their intersection as the two-sided confidence interval. An upper confidence bound can be obtained by finding the largest $\alpha$ for which the respective testing $T_n$ in (\ref{eq:testing_from_E}) yields $0$; a lower confidence bound can be obtained similarly by considering the observation $q^{lb}_i=m+M-q_i$ in lieu of the observation $q_i$.   We set $\varepsilon=0.5$ and adopt uniform grid of size $10000$ to approximately find the largest $\alpha$ with $T_n=0$. We employ the WSR betting strategy, and set the upper bound of $\lambda_i$ in (\ref{eq:betting_WSR}) to $1/(M-m)$  to ensure the monotonicity of \texttt{R-Eval}'s and \texttt{R-AutoEval}'s e-values \citep{einbinder2024semi, bates2021distribution}. Fig.~\ref{fig:CI} shows the corresponding (top) confidence interval and (bottom) its width versus the number of real-world data $n$. Recall from Example~\ref{example} and \ref{example_2} that the quality of synthetic data decreases in the order of (a), (b), and (c). Across all scenarios, \texttt{R-AutoEval+} returns confidence intervals that have the smallest size.

\subsection{LLM Reasoning: Wider range of autoevaluators}
Table~\ref{tab:app} extends Table~\ref{tab:main} in the main text by considering additional autoevaluators: BitNet \citep{ma2025bitnet}, Llama-3 \citep{dubey2024llama}, DeepSeek-R1 \citep{guo2025deepseek}, Qwen3 \citep{yang2025qwen3}, and GPT4 \citep{achiam2023gpt}. The table confirms again the efficiency gain of \texttt{R-AutoEval+} as compared to \texttt{R-Eval} and \texttt{R-AutoEval}; also demonstrating again the benefit of choosing autoevaluator from the different family of the base model to achieve better efficiency for autoeval-based approaches.

\begin{table}
  \caption{Selecting the smallest reasoning budget for Qwen3-1.7B that ensures at least 3\% accuracy enhancement as compared to its non-reasoning mode, evaluated on GSM8K data set \citep{cobbe2021training}: average number of generated tokens with standard deviation shown within parentheses.}
  \label{tab:app}
  \centering
\begin{tabular}{llll}
\toprule
\textbf{autoevaluator (accuracy)} & \texttt{R-Eval} & \texttt{R-AutoEval} & \texttt{R-AutoEval+} \\
\midrule
BitNet b1.58 (35\%)          & \multirow{13}{*}{983.34 (137.87)} & 950.27 (152.84)  & \textbf{942.47 (135.65)} \\
Llama-3.2-1B-Instruct (26\%) &                                 & 964.91 (136.55)  & \textbf{954.63 (145.60)} \\
DeepSeek-R1-Distill-Qwen-1.5B (33\%) &                                 & 943.98 (136.38)  & \textbf{935.58 (130.83)}\\
Llama-3.2-3B-Instruct (66\%) &                                 & 900.58 (122.70)  & \textbf{892.86 (112.10)} \\
DeepSeek-R1-Distill-Qwen-7B (54\%) &                                 &941.66 (126.32)  & \textbf{926.85 (118.08)} \\
DeepSeek-R1-Distill-Llama-8B (67\%) &                                 &916.34 (129.83)  & \textbf{900.82 (119.73)} \\
DeepSeek-R1-0528-Qwen3-8B (32\%) &                                 &986.53 (140.77) & \textbf{961.94 (138.98)} \\
Qwen3-32B (82\%)             &                                 & 1007.45 (150.48) & \textbf{941.20 (129.61)} \\
DeepSeek-R1-Distill-Qwen-32B (89\%)             &                                 & 893.39 (105.13) & \textbf{866.22 (80.58)} \\
Llama-3.3-70B-Instruct (89\%)             &                                 & 854.42 (90.26) & \textbf{847.05 (69.21)} \\
DeepSeek-R1-Distill-Llama-70B (88\%)             &                                 & 863.72 (71.60) & \textbf{858.02 (69.59)} \\
GPT-4.1 nano (90\%)             &                                 & 879.45 (93.95) & \textbf{865.28 (72.34)} \\
GPT-4.1 (93\%)               &                                 & 883.99 (103.00)  & \textbf{856.13 (70.93)} \\
\bottomrule
\end{tabular}
\end{table}

\subsection{LLM Reasoning:  Sensitivity of testing-by-betting-type algorithms with respect to data ordering}
We first recall that the algorithms that builds upon the testing-by-betting framework \citep{waudby2024estimating} need to impose some arbitrary ordering of the data to sequentially run the algorithm. Here we investigate the impact of such ordering on the final decision of the algorithms. To this end, we fix the calibration-test data split and only change the ordering of data randomly for $100$ times to compute the normalized deviation (standard deviation / mean) of the generated tokens dictated by each algorithm.  We consider GPT-4.1 and DeepSeek-R1-0528-Qwen3-8B as the autoevaluators for the experimental results. As can be seen from Table~\ref{tab:ordering}, \texttt{R-AutoEval} and \texttt{R-AutoEval+} are more robust to the data ordering than \texttt{R-Eval}, which may be possibly attributed to its reduced variability of averaged autoevaluated data (at each round we average $r$ number of autoevaluated examples), while \texttt{R-AutoEval+} tends to be slightly less robust than \texttt{R-AutoEval} for its adaptive nature that incorporates the data ordering.

\begin{table}
  \caption{Sensitivity of algorithms with respect to data ordering. Normalized deviation (standard deviation / mean) of the generated tokens yielded by each algorithm for a fixed calibration-test data split is shown for each algorithm.}
  \centering
\begin{tabular}{llll} \toprule autoevaluator & \texttt{R-Eval} & \texttt{R-AutoEval} & \texttt{R-AutoEval+} \\ \midrule GPT4.1 & 8.62\% & 3.90\% & 6.88\% \\ DeepSeek-R1-0528-Qwen3-8B & 8.62\% & 6.32\% & 6.00\% \\  \bottomrule \end{tabular} \label{tab:ordering}
\end{table}

\subsection{LLM Reasoning: Impact of the number of candidate factors $S$}
We now start investigating the impact of hyperparameters required by \texttt{R-AutoEval+}.  We first investigate the impact on the number of candidates, $S$, with candidate factors equally spaced within the interval $[0,1]$; and with equal initial weights. We report the average number of generated tokens with corresponding standard deviation in a manner similar to Table~\ref{tab:main} and \ref{tab:app}.

\begin{table}
  \caption{  Impact of the number of candidate factors ($S$) for \texttt{R-AutoEval+}. The other settings are the same as in Table~\ref{tab:main} and \ref{tab:app}. } 
  \centering
\begin{tabular}{llllll} \toprule autoevaluator &  $S=2$ &  $S=5$ & $S=10$ &  $S=20$ \\ \midrule GPT4.1&871.67 (16.24)&\textbf{854.99 (14.02)}&856.13 (13.90)&	856.13 (13.90)\\ DeepSeek-R1-0528-Qwen3-8B &964.41 (27.17)&\textbf{961.94 (27.24)}&	\textbf{961.94 (27.24)}&\textbf{961.94 (27.24)}\\  \bottomrule \end{tabular} \label{tab:S}\vspace{-0.6cm}
\end{table}
As can be observed in Table~\ref{tab:S}, as long as the candidate factors includes  $\rho=0$ (\texttt{R-Eval}) and  $\rho=1$ (\texttt{R-AutoEval}), i.e., as long as $S\geq 2$, \texttt{R-AutoEval+} consistently outperforms both \texttt{R-Eval} and \texttt{R-AutoEval}, with its gain being maximized at $S=5$, and saturates after $S=10$.

\subsection{LLM Reasoning: Impact of the values of candidate factors ($\{\rho_s\}_{s=1}^S$)}

Next, we investigate the impact of the values of each candidate factors by going beyond the equally spaced grid. To this end, we consider Dirichlet distribution with concentration parameter $0.1, 1, 10$ to allocate the  values within the $[0,1]$ interval. Specifically, the cumulative probabilities of the Dirichlet distribution are used as the candidate factors and  we keep the first and the last factors to be $0$ and $1$ to ensure the efficiency guarantee as per our Theorem~\ref{theor:2}. An increased concentration parameter makes samples from the Dirichlet distribution closer to the uniform distribution, and hence the grid becomes closer to the equally spaced grid. Table~\ref{tab:rho} shows the preference for setting the grid with equal spacing. 
\begin{table}
  \caption{  Impact of the values of candidate factors ($\{\rho_s\}_{s=1}^S$) for \texttt{R-AutoEval+}. The other settings are the same as in Table~\ref{tab:main} and \ref{tab:app}. } 
  \centering
\begin{tabular}{llllll} \toprule autoevaluator &  equal grid &  $\text{Dir}(10,...,10)$ & $\text{Dir}(1,...,1)$&  $\text{Dir}(0.1,...,0.1)$\\ \midrule GPT4.1&\textbf{856.13 (13.90)} &\textbf{856.13 (13.90)}&860.91 (15.63)&	873.27 (15.10)\\ DeepSeek-R1-0528-Qwen3-8B &961.94 (27.24)&964.22 (26.77)	&\textbf{960.76 (27.09)}&966.72 (27.46)\\  \bottomrule \end{tabular} \label{tab:rho}
\end{table}

\subsection{LLM Reasoning:  Impact of the weights associated with the candidate factors ($\{w_{s,0}\}_{s=1}^S$)}
The last remaining degree of freedom is given by the choice of the weights associated with each candidate factor. To this end, we go beyond the uniform distribution and consider again the Dirichlet distribution to allocate the weights for the  candidate factors. Table~\ref{tab:w} confirms again the preference for equal weighting.
\begin{table}
  \caption{ Impact of the initial weights ($\{w_{s,0}\}_{s=1}^S$) for \texttt{R-AutoEval+}. The other settings are the same as in Table~\ref{tab:main} and \ref{tab:app}. } 
  \centering
\begin{tabular}{llllll} \toprule autoevaluator &  equal weight &  $\text{Dir}(10,...,10)$ & $\text{Dir}(1,...,1)$&  $\text{Dir}(0.1,...,0.1)$\\ \midrule GPT4.1&\textbf{856.13 (13.90)} &\textbf{856.13 (13.90)}&\textbf{856.13 (13.90})&	856.68 (12.31)\\ DeepSeek-R1-0528-Qwen3-8B &961.94 (27.24)&\textbf{959.21 (27.04)}	&962.41 (26.52)	&969.50 (25.59)\\  \bottomrule \end{tabular} \label{tab:w}
\end{table}

\section{Proofs}\label{app:proofs}
Here we provide proofs of Lemma~\ref{lemma:sublinear_regret_weight} and Theorem~\ref{theor:2}.
\subsection{Proof of Lemma~\ref{lemma:sublinear_regret_weight}} \label{pf:lemma_1}
This proof follows as in \citep[Sec. III]{cover2002universal} (see also \citep[Sec. A]{de2014follow} for a more general version).   
\begin{proof}
We first rewrite the e-value $E_{s,i}$ in (\ref{eq:per_rho_E_n}) in an exponential form as
\begin{align}
    E_{s,i} = \exp{\bigg( \sum_{j=1}^i \underbrace{\log \Big((1-\lambda_{s,j}(\ell_{s,j}^f-\alpha))}_{=g_{s,j}} \Big)\bigg)},
\end{align}
where $g_{s,j}$ is the increment of log-e-value $\log E_{s,j}$ at round $j$. We will further simplify the notation by writing $g_{s,1:i}$ as the summation of logarithmic increments up to round $i$, i.e., $g_{s,1:i}=\sum_{j=1}^i g_{s,j}$. The update rule (\ref{eq:rebalance}) for the weight $w_{s,i}$ can then be rewritten as  
\begin{align}
    w_{s,i} = \frac{w_{s,0} \cdot \exp{({g_{s,1:i-1}}) }}{\sum_{s'=1}^S w_{s',0} \cdot \exp{({g_{s',1:i-1}})}}. 
\end{align}
Now, by unfolding the product term in (\ref{eq:general_E_n1}) as 
\begin{align}
    E_n = \frac{ \cancel{\sum_{s=1}^S w_{s,0} \cdot \exp{(g_{s,1:1}}) }}{\sum_{s=1}^S w_{s,0}\cdot 1} \times \cdots \times \frac{ \sum_{s=1}^S w_{s,0} \cdot \exp{(g_{s,1:n}}) }{\cancel{\sum_{s=1}^S w_{s,0}\cdot  \exp{(g_{s,1:n-1}}})},
\end{align}
we have 
\begin{align} \label{eq:pf:E_n_begin_sum_e}
    E_n = \sum_{s=1}^S w_{s,0}\cdot \exp{(g_{s,1:n})} = \sum_{s=1}^S w_{s,0} E_{s,n}.
\end{align}
For any positive weights $\{w_{s,0}\}_{s=1}^S$, it follows that 
\begin{align}
    E_n \geq w_{s,0} E_{s,n} \ \ \text{for all $s=1,...,S$},
\end{align}
from which we conclude
\begin{align}
    \max_{s=1,...,S}  \log E_{s,n} - \log E_n \leq \max_{s\in\{1,...,S\}} \log\bigg(  \frac{1}{w_{s,0}}\bigg).
\end{align} \end{proof}

\subsection{Proof of Theorem~\ref{theor:2}} \label{app:pf_our_thm}
We start this subsection with proof sketch of Theorem~\ref{theor:2}  that builds upon \citep[B.4]{waudby2025universal}. This is followed by a series of useful properties of  the  \texttt{R-AutoEval+}'s e-value (\ref{eq:general_E_n1}), which are instrumental for proving Theorem~\ref{theor:2}. The full proof then follows the proof technique in \citep[B.4]{waudby2025universal}.

\subsubsection{Proof sketch} \label{app:pf_sketch}
\citet{waudby2025universal} showed that the sample complexity of testing (\ref{eq:testing_from_E}) given some  e-value $E_n$ of the form (\ref{eq:basic_E_n}) can be studied by investigating the simpler e-value with optimal  constant betting strategy $\lambda_\star$. Recall that in Sec.~\ref{subsec:sample_compl_r_eval}, we have already defined such simpler e-value, and we repeat its definition here again for convenience
\begin{align}
    E_n(\lambda_\star) = \prod_{i=1}^n \big( 1-\lambda_\star(q_i-\alpha) \big).
\end{align}

The rationale for this simplification is as follows: (\emph{i}) The simpler e-value cannot be too large as compared to the actual e-value obtained using betting strategy that satisfies assumption A1 in Theorem~\ref{thm:r_eval} (sublinear regret); (\emph{ii}) the actual e-value also cannot be too large as compared to the simpler e-value (so called numeraire property, see below Lemma~\ref{lemma:numeraire_ws}), hence one can instead study the sample complexity of such simpler e-value in order to characterize the actual sample complexity of  \texttt{R-Eval}. 

That is to say, as long as we can also find a simpler e-value with a bounded deviation from \texttt{R-AutoEval+}'s e-value $E_n$ in (\ref{eq:general_E_n1}), we can then study the corresponding sample complexity instead. As can be anticipated from Theorem~\ref{theor:2}, such simpler e-value is described as  
\begin{align} \label{eq:simple_e_ours}
    E_{\bar{s}, n}(\lambda_{\bar{s},\star}) = \prod_{i=1}^n \big( 1 - \lambda_{\bar{s},\star}(\ell_{\bar{s},i}^f-\alpha)), 
\end{align}
where $\bar{s}$ is the index that maximizes the expected increment of the $s$-th log-e-value, i.e., 
\begin{align}
\bar{s} = \arg\max_{s=1,...,S}g_{s,\star}.    
\end{align}

While intuitively \texttt{R-AutoEval+}'s e-value has a bounded deviation from $E_{\bar{s}, n}(\lambda_{\bar{s},\star})$, one subtlety here is that, the best index $\bar{s}$ might not be equal with the one that is chosen at hindsight, i.e., the following event can happen 
\begin{align}
    \arg\max_{s=1,...,S} g_{s,\star} \neq \arg\max_{s=1,...,S} E_{s,n}
\end{align}
with non-negligible probability. To overcome this issue, we focus on the upper bound of the sample complexity by considering a \emph{deterministically} no larger e-value $E^\texttt{+}_{n}$ as compared to \texttt{R-AutoEval+}'s e-value, defined as follows
\begin{align} \label{eq:def:E+}
    E^\texttt{+}_n =\sum_{s=1}^S w_{s,0} \min\{E_{s,n}, E_{\bar{s},n} \}  \leq  \sum_{s=1}^S w_{s,0} E_{s,n}  \overset{(\ref{eq:pf:E_n_begin_sum_e})}{=} E_n.
\end{align}

Accordingly, we denote the sample complexity of $E_n^\texttt{+}$ as 
\begin{align} \label{eq:sample_compl_E+}
    n_\text{min}^\texttt{+}(\delta) = \mathbb{E}[\min\{n: E_n^\texttt{+}\geq 1/\delta \}|R\leq \alpha]
\end{align}
which is no smaller than $n_\text{min}^\texttt{R-AutoEval+}$ due to (\ref{eq:def:E+}). We then show the upper bound of $n_\text{min}^\texttt{+}(\delta)$, which will prove Theorem~\ref{theor:2}.

\subsubsection{Useful properties} \label{subsec:useful_properties}
We first summarize the useful properties of $E_n^\texttt{+}$, which bounds its amount of deviation with respect to the simpler e-value $E_{\bar{s},n}(\lambda_{\bar{s},\star})$. We first discuss the \emph{regret}, which dictates the lower bound  deterministically, then discuss the \emph{suboptimality ratio} which captures its upper bound  probabilistically. 

\textbf{Regrets:} The regret of $E_n^\texttt{+}$ as compared to the one that chooses the constant betting at hindsight with index chosen as $\bar{s}$ can be formally defined as
\begin{align} \label{pf:our_regret}
    r_k^\texttt{+} &= \max_{\lambda \in [0,1/(1+\rho_{\bar{s}}-\alpha))} \log E_{\bar{s},k}(\lambda) - \log E_k^\texttt{+} \nonumber\\&=  \max_{\lambda \in [0,1/(1+\rho_{\bar{s}}-\alpha))} \log E_{\bar{s},k}(\lambda) - \log E_{\bar{s},k} + \log E_{\bar{s},k} - \log E_k^\texttt{+} \nonumber\\
    & \overset{(a)}{=} \underbrace{\max_{\lambda \in [0,1/(1+\rho_{\bar{s}}-\alpha))} \log E_{\bar{s},k}(\lambda) - \log E_{\bar{s},k}}_{=\text{regret}_{\bar{s},k}^\text{bet}} + \log E_{\bar{s},k} - \log \underbrace{\sum_{s=1}^S w_{s,0}\min\{E_{s,k},E_{\bar{s},k}\}}_{\geq w_{\bar{s},0}E_{\bar{s},k}} \nonumber\\&\leq \text{regret}_{\bar{s},k}^\text{bet} + \log E_{\bar{s},k} - \log(w_{\bar{s},0}E_{\bar{s},k})  = \text{regret}_{\bar{s},k}^\text{bet} + \log(1/w_{\bar{s},0}).
\end{align}
where (a) is due to the definition of $E^\texttt{+}_k$ in (\ref{eq:def:E+}), and we have denoted as $\text{regret}_{s,n}^\text{bet}$  the  regret of betting associated with the $s$-th e-value $E_{s,i}$ in (\ref{eq:per_rho_E_n}) at round $n$, which is sublinear under the choice of betting strategy as per assumption A1 in Theorem~\ref{thm:r_eval}, e.g., universal portfolio strategy (see Appendix~\ref{app:UP}). Accordingly, under assumption A1 in Theorem~\ref{thm:r_eval}, we can conclude that $r_k^\texttt{+}$ is sublinear, i.e.,  
\begin{align} \label{eq:our_r_sub}
    r_k^\texttt{+} =  o(n).
\end{align}

\textbf{Suboptimality ratio:} The probabilistic upper bound of e-values can be captured by the expected ratio with respect to the simpler e-values.  We first summarize the known results of \texttt{R-Eval} \citep{waudby2025universal}.
\begin{lemma}[\texttt{R-Eval}'s suboptimality ratio \protect{\citep[Lemma 5.3]{waudby2025universal}}] \label{lemma:numeraire_ws} Consider an e-value of the form (\ref{eq:basic_E_n}) with some arbitrary betting strategy $\lambda_i$. Under the alternative hypothesis $\mathcal{H}_1: R \leq \alpha$, the following property holds at any round $n$
\begin{align} \label{eq:numeraire_ws}
    \mathbb{E}\Big[\frac{E_n}{E_n(\lambda_\star)}\Big] \leq 1. 
\end{align}
\end{lemma}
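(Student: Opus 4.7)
The plan is to establish Lemma~\ref{lemma:numeraire_ws} via the standard numeraire-portfolio argument: a per-round inequality from the first-order optimality of $\lambda_\star$, followed by a telescoping/supermartingale step that exploits the i.i.d.\ structure of $\{q_i\}$ together with the predictability of $\lambda_i$.

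First I would prove the key single-round inequality: for every $\lambda\in[0,1/(M-\alpha))$,
\begin{equation}\label{eq:per-step}
\mathbb{E}\Bigl[\tfrac{1-\lambda(q_i-\alpha)}{1-\lambda_\star(q_i-\alpha)}\,\Big|\,R\le\alpha\Bigr]\le 1.
\end{equation}
To see this, note that $\phi(\lambda):=\mathbb{E}[\log(1-\lambda(q_i-\alpha))\mid R\le\alpha]$ is concave in $\lambda$, with derivative $\phi'(\lambda)=-\mathbb{E}[(q_i-\alpha)/(1-\lambda(q_i-\alpha))\mid R\le\alpha]$. By the definition of $\lambda_\star$ as the maximizer on $[0,1/(M-\alpha))$, either (i) $\lambda_\star$ is interior, so $\phi'(\lambda_\star)=0$, i.e. $\mathbb{E}[(q_i-\alpha)/(1-\lambda_\star(q_i-\alpha))\mid R\le\alpha]=0$; or (ii) $\lambda_\star=0$, in which case $\phi'(0)\le 0$ gives $\mathbb{E}[q_i-\alpha\mid R\le\alpha]\ge 0$, combined with $R\le\alpha$ yielding $\mathbb{E}[q_i-\alpha\mid R\le\alpha]=0$. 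Writing
\begin{equation*}
\tfrac{1-\lambda(q_i-\alpha)}{1-\lambda_\star(q_i-\alpha)} = 1 - (\lambda-\lambda_\star)\tfrac{q_i-\alpha}{1-\lambda_\star(q_i-\alpha)},
\end{equation*}
taking conditional expectation yields $\mathbb{E}[\,\cdot\,]=1$ in case (i), and $\mathbb{E}[\,\cdot\,]=1-\lambda\,\mathbb{E}[q_i-\alpha\mid R\le\alpha]=1$ in case (ii), so \eqref{eq:per-step} holds (with equality, in fact). A minor care is needed to confirm that the denominator is strictly positive a.s., which follows from the constraint $\lambda_\star(M-\alpha)<1$ and $q_i\in[m,M]$.

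Next I would lift this to the full-horizon statement via a supermartingale argument. Let $\mathcal{F}_i=\sigma(q_1,\dots,q_i)$ and define the ratio process $M_n:=E_n/E_n(\lambda_\star)=\prod_{i=1}^n (1-\lambda_i(q_i-\alpha))/(1-\lambda_\star(q_i-\alpha))$, with $M_0=1$. Since $\lambda_i$ is $\mathcal{F}_{i-1}$-measurable and $q_i$ is independent of $\mathcal{F}_{i-1}$ with the same marginal as in \eqref{eq:per-step}, conditioning on $\mathcal{F}_{i-1}$ and invoking \eqref{eq:per-step} pointwise at $\lambda=\lambda_i(\omega)$ gives
\begin{equation*}
\mathbb{E}[M_i\mid\mathcal{F}_{i-1}, R\le\alpha] = M_{i-1}\cdot\mathbb{E}\Bigl[\tfrac{1-\lambda_i(q_i-\alpha)}{1-\lambda_\star(q_i-\alpha)}\,\Big|\,\mathcal{F}_{i-1},R\le\alpha\Bigr]\le M_{i-1}.
\end{equation*}
Hence $\{M_n\}_{n\ge 0}$ is a nonnegative $(\mathcal{F}_n)$-supermartingale under the alternative, so $\mathbb{E}[E_n/E_n(\lambda_\star)\mid R\le\alpha]=\mathbb{E}[M_n\mid R\le\alpha]\le\mathbb{E}[M_0]=1$, which is exactly \eqref{eq:numeraire_ws}.

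The main obstacle I expect is the boundary case $\lambda_\star=0$: the first-order condition is only a one-sided inequality, so I must carefully combine $\phi'(0)\le 0$ with the alternative-hypothesis constraint $\mathbb{E}[q_i-\alpha\mid R\le\alpha]\le 0$ to recover the equality $\mathbb{E}[q_i-\alpha\mid R\le\alpha]=0$ that makes \eqref{eq:per-step} go through; a standard truncation/continuity argument may also be invoked to handle any edge case where $\lambda_\star$ sits arbitrarily close to $1/(M-\alpha)$. Everything else is essentially the concavity of $\log(1-\cdot)$ plus the tower property.
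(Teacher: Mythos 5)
The paper does not prove this lemma itself; it is cited verbatim from \citep[Lemma 5.3]{waudby2025universal}, so there is no in-paper proof to compare against. Your numeraire/supermartingale argument is the standard proof of this result and is essentially correct: the first-order optimality of $\lambda_\star$ for the concave map $\phi(\lambda)=\mathbb{E}[\log(1-\lambda(q_i-\alpha))\mid R\le\alpha]$ gives the per-round ratio bound, and predictability of $\lambda_i$ together with the i.i.d.\ structure of $\{q_i\}$ lifts it to the full-horizon bound via the tower property. One small clarification: your case (ii), $\lambda_\star=0$, can occur only at the degenerate boundary $R=\alpha$ of the alternative, since $\phi'(0)=\alpha-R\le 0$ combined with $R\le\alpha$ forces $R=\alpha$; for $R<\alpha$ the maximizer is necessarily positive, so in fact the per-round ratio has expectation exactly $1$ whenever $\lambda_\star$ is attained, making $M_n$ a martingale rather than merely a supermartingale. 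The only genuine loose end is the one you flag: on the half-open interval $[0,1/(M-\alpha))$ the maximizer $\lambda_\star$ need not be attained, and this is handled in \citep{waudby2025universal} by the regularity conditions that accompany the lemma (ensuring $\phi$ attains its maximum), so your truncation/continuity caveat is the right note to append.
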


We then extend the above property to $E_n^\texttt{+}$ in (\ref{eq:def:E+}).
\begin{lemma}[$E_n^\texttt{+}$'s suboptimality ratio]\label{lemma:numeraire_ours} Consider the e-value $E_n^\texttt{+}$ in (\ref{eq:def:E+}) with some arbitrary betting strategy $\{\lambda_{s,i}\}_{s=1}^S$. Under the alternative hypothesis $\mathcal{H}_1: R \leq \alpha$, the following property holds at any round $n$ 
\begin{align} \label{eq:numeraire_ours}
    \mathbb{E}\Big[\frac{E_n^\texttt{+}}{  E_{\bar{s},n}(\lambda_{\bar{s},\star})}\Big] \leq 1.
\end{align} 
\end{lemma}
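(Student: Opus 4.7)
The plan is to exploit the deterministic upper bound built into the definition of $E_n^{\texttt{+}}$ and then invoke the already-established suboptimality ratio for a single-strategy e-value, namely Lemma~\ref{lemma:numeraire_ws}. The observation that makes the argument short is that the $\min$ in the definition $E_n^{\texttt{+}}=\sum_{s=1}^S w_{s,0}\min\{E_{s,n},E_{\bar{s},n}\}$ caps every summand by the \emph{same} quantity $E_{\bar{s},n}$, independently of $s$, which collapses the weighted sum.

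First I would pull the trivial inequality $\min\{E_{s,n},E_{\bar{s},n}\}\le E_{\bar{s},n}$ for each $s$, and combine it with $\sum_{s=1}^S w_{s,0}=1$ to obtain the pointwise (almost sure) bound $E_n^{\texttt{+}}\le E_{\bar{s},n}$. Next I would divide both sides by the strictly positive quantity $E_{\bar{s},n}(\lambda_{\bar{s},\star})$ and take expectations under the alternative hypothesis $\mathcal{H}_1:R\le\alpha$, obtaining
\begin{align}
    \mathbb{E}\!\left[\frac{E_n^{\texttt{+}}}{E_{\bar{s},n}(\lambda_{\bar{s},\star})}\right]
    \le \mathbb{E}\!\left[\frac{E_{\bar{s},n}}{E_{\bar{s},n}(\lambda_{\bar{s},\star})}\right].
\end{align}

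Finally I would apply Lemma~\ref{lemma:numeraire_ws} to the right-hand side. This is legitimate because $E_{\bar{s},n}=\prod_{i=1}^n(1-\lambda_{\bar{s},i}(\ell^f_{\bar{s},i}-\alpha))$ has exactly the form \eqref{eq:basic_E_n} with observations $q_i=\ell^f_{\bar{s},i}$; the latter are bounded and unbiased for $R$, as recalled just below \eqref{eq:effectiveobs}, so $\lambda_{\bar{s},\star}$ is a well-defined constant optimal bet and the numeraire inequality yields $\mathbb{E}[E_{\bar{s},n}/E_{\bar{s},n}(\lambda_{\bar{s},\star})]\le 1$, which proves the claim.

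The only subtlety, and what I would view as the main pitfall rather than a true obstacle, is that $\bar{s}=\arg\max_s g_{s,\star}$ is a \emph{deterministic} index depending only on the data-generating distribution and on the fixed grid $\{\rho_s\}$, not on the realized trajectory. This is exactly what allows both $E_{\bar{s},n}$ and $\lambda_{\bar{s},\star}$ to be well-defined nonanticipative objects, so that Lemma~\ref{lemma:numeraire_ws} applies as stated. Had the comparison been made against the data-dependent maximizer $\arg\max_s E_{s,n}$, the numeraire step would fail; this is precisely the reason the proof sketch in Sec.~\ref{app:pf_sketch} replaces $E_n$ by the smaller proxy $E_n^{\texttt{+}}$, and the replacement dovetails perfectly with the argument above.
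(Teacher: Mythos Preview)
Your proof is correct and follows essentially the same route as the paper: you use $\min\{E_{s,n},E_{\bar s,n}\}\le E_{\bar s,n}$ together with $\sum_s w_{s,0}=1$ to bound $E_n^{\texttt{+}}\le E_{\bar s,n}$ pointwise, then invoke Lemma~\ref{lemma:numeraire_ws} for the fixed index $\bar s$. Your explicit remark that $\bar s$ is deterministic (not the data-dependent $\arg\max_s E_{s,n}$) is exactly the subtlety the paper flags with the phrase ``noting that $\bar s$ is a fixed variable.''
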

\begin{proof}
We first recall $E_n^\texttt{+} =\sum_{s=1}^S w_{s,0}\min\{E_{s,n}, E_{\bar{s},n}\}$ from (\ref{eq:def:E+}). We then have
\begin{align} 
    \mathbb{E}\Big[ \frac{E_n^\texttt{+}}{E_{\bar{s},n}(\lambda_{\bar{s},\star})} \Big] \leq \mathbb{E}\Big[ \frac{\sum_{s=1}^S w_{s,0}E_{{\bar{s}},n}}{ E_{\bar{s},n}(\lambda_{\bar{s},\star})} \Big] = \mathbb{E}\Big[ \frac{E_{{\bar{s}},n}}{ E_{\bar{s},n}(\lambda_{\bar{s},\star})} \Big] \leq 1. 
\end{align}
where the last inequality is due to Lemma~\ref{lemma:numeraire_ws} by noting that $\bar{s}$ is a fixed variable. 
\end{proof}

\subsubsection{Full proof}
We now show the upper bound of \texttt{R-AutoEval+}. Most of the steps are identical with the proof technique of \citep[B.4]{waudby2025universal} that uses regret and suboptimality ratio to get an upper bound of the sample complexity. We just need to take into account for the regret and suboptimality ratio that are  tailored for \texttt{R-AutoEval+} as defined in the previous subsection.

\textbf{Upper bounding} $n_\text{min}^\texttt{R-AutoEval+}(\delta)$. As discussed earlier, we focus on the upper bound of $n_\text{min}^{\texttt{+}}(\delta)$ (see (\ref{eq:sample_compl_E+})). 

\citet[B.4]{waudby2025universal} started the proof by first showing that the expected stopping time $\mathbb{E}[\tau]$ given the stopping time $\tau$ of a random process $(W_i)_{i=1}^\infty$ defined as $\tau=\inf\{n\in \mathbb{N}:W_n \geq 1/\delta\}$, can be simplified as
\begin{align} \label{pf:wsr_1}
    \mathbb{E}[\tau] \leq m + 1+ \sum_{k=m}^\infty \Pr[ |k^{-1}\log W_k - g| \geq c ],
\end{align}
for some constant $g$, where the variables $c$ and $m$ are respectively defined as 
\begin{align} \label{eq:proof:m}
    c = \frac{\beta}{1+\beta}g \ \ \text{ and } \ \ m= \bigg\lceil \frac{\log(1/\delta)}{g-c} \bigg\rceil, 
\end{align}
given some $\beta \in (0,1).$

We now start our proof by taking $E_i^\texttt{+}$ and $g_{\bar{s},\star}$ in lieu of $W_i$ and $g$ from (\ref{pf:wsr_1}), i.e., 
\begin{align}
    n_\text{min}^\texttt{+}(\delta) \leq m+ 1 + \underbrace{\sum_{k=m}^\infty \Pr\big[\big|k^{-1}\log E^\texttt{+}_k - g_{\bar{s},\star}\big| \geq c}_{(\square)},
\end{align}
with the variables $c$ and $m$ being redefined as   
\begin{align} \label{eq:proof:m}
    c = \frac{\beta}{1+\beta}g_{\bar{s},\star} \ \ \text{ and } \ \ m= \bigg\lceil \frac{\log(1/\delta)}{g_{\bar{s},\star}-c} \bigg\rceil. 
\end{align}
Following the next step of
\citet[B.4]{waudby2025universal}, we  upper bound  $(\square)$ as 
\begin{align} \label{eq:pf:square}
    \square &\leq \underbrace{\sum_{k=m}^\infty \Pr\big[ \big|k^{-1}\log E^\texttt{+}_k - k^{-1}\log E_{\bar{s},k}(\lambda_{\bar{s},\star}) \big| \geq c/2\big]}_{(\dagger)} 
    \nonumber\\&+ \underbrace{\sum_{k=m}^\infty \Pr\big[ \big| k^{-1}\log E_{\bar{s},k}(\lambda_{\bar{s},\star}) - g_{\bar{s},\star} \big| \geq c/2 \big]}_{(\dagger\dagger)}.
\end{align}
Note that each term in (\ref{eq:pf:square}) measures the tail probability associated with the deviation between $(\dagger):$ $E_k^\texttt{+}$'s log-e-value and the one with optimal constant betting/weighting $(\dagger\dagger):$ average log-e-value and the expected log-e-value. 

We first bound $(\dagger)$.  The $k$-th summand of $(\dagger)$ can be bounded as follows denoting as  $\Delta_k = k^{-1}\log(E_k^\texttt{+}/E_{\bar{s},k}(\lambda_{\bar{s},\star}) )$:
\begin{align}
    \Pr[|\Delta_k| \geq c/2] &\leq \Pr[\Delta_k \geq c/2] + \Pr[-\Delta_k \geq c/2] \nonumber\\
    & = \Pr\big[\log(E_k^\texttt{+}/E_{\bar{s},k}(\lambda_{\bar{s},\star})) \geq kc/2] \nonumber\\&+ \Pr[ k^{-1}\log E_{\bar{s},k}(\lambda_{\bar{s},\star}) -k^{-1}\log E_k^\texttt{+} \geq c/2\big] \nonumber\\&
    \leq \exp(-kc/2) + \mathbbm{1}\big(k^{-1}r_k^\texttt{+} \geq c/2\big), 
\end{align}
where the first part of the inequality is due to Lemma~\ref{lemma:numeraire_ours}; the second part of the inequality is due to the sublinear regret (\ref{eq:our_r_sub}).

The rest is identical with \citep[B.4]{waudby2025universal}, although we proceed the proof to make the paper self-contained.  The term $(\dagger)$ can be further bounded  by noting  $\sum_{k=m-1}^\infty \exp(-kc/2) \leq 2(1+\beta)\delta^{\beta/2}/\beta g_{\bar{s},\star}$  as
\begin{align}
    (\dagger) \leq 1 + \frac{2(1+\beta)\delta^{\beta/2}}{\beta g_{\bar{s},\star}} + \sum_{k=m}^\infty \mathbbm{1}\big( k^{-1}r_k^\texttt{+} \geq c/2\big)
\end{align}
where the third term is finite if and only if $r_k^\texttt{+}$ is sublinear. 

The remaining term, $(\dagger\dagger)$, can be bounded using concentration inequality, in particular Chebyshev-Nemirovski inequality \citep[Lemma B.1]{waudby2025universal}
\begin{align}
    (\dagger\dagger) \leq 1 + \frac{2^\sigma \nu_{\sigma,\star}}{c^{\sigma/2}m^{\sigma/2-1}(\sigma/2-1)},
\end{align}
where we denote the $s^\text{th}$ central moment as   $\nu_{\sigma,\star}= \mathbb{E}[| \log (1-\lambda_{\bar{s},\star}(\ell_{\bar{s},i}^f -\alpha) ) - g_{\bar{s},\star}|^\sigma| R\leq \alpha]$,  which is finite as per the assumption A2 made in Theorem~\ref{theor:2}.

Putting everything together, we get the upper bound for any $\beta \in (0,1)$ as
\begin{align} \label{eq:upper}
   n^\texttt{+}_\text{min}(\delta) &\leq 4 + \frac{(1+\beta)\log(1/\delta)}{g_{\bar{s},\star}} + \frac{2(1+\beta)\delta^{\beta/2}}{\beta g_{\bar{s},\star}} \nonumber\\&+ \frac{2^\sigma \nu_{\sigma,\star}}{\beta(\sigma/2 -1)}\Big( \frac{1+\beta}{g_{\bar{s},\star}} + \frac{1}{\log(1/\delta)} \Big) + \sum_{k=m}^\infty \mathbbm{1}(k^{-1}r_k^\texttt{+}\geq c/2).
\end{align}
In the limit $\delta \rightarrow 0^+$, and when considering the ratio $n^\texttt{+}_\text{min}(\delta)/\log(1/\delta)$,  one can make the factor $(1+\beta)$ arbitrarily close to $1$ in a manner similar to \citep{waudby2025universal}, which concludes the proof of Theorem~\ref{theor:2}. \qed

\end{document}